\newcommand\headercell[1]{%
   \smash[b]{\begin{tabular}[t]{@{}c@{}} #1 \end{tabular}}}
\newtheorem{theorem}{Theorem}
\newtheorem{lemma}{Lemma}
\newtheorem{assumption}{Assumption}
\long\def\af#1{{\ifnum\Comments=1\color{blue} [AF: #1]\fi}}
\long\def\ns#1{{\ifnum\Comments=1\color{red} [NS: #1]\fi}}
\long\def\ef#1{{\ifnum\Comments=1\color{cyan} [EF: #1]\fi}}
\long\def\plan#1{{\ifnum\Comments=1\color{purple} [PLAN: #1]\fi}}
\title{A Parallel CPU-GPU Framework for Batching Heuristic Operations in Depth-First Heuristic Search}
\author{
    %Authors
    % All authors must be in the same font size and format.
    Ehsan Futuhi\textsuperscript{\rm 1}, 
    Nathan R. Sturtevant\textsuperscript{\rm 1,2}
}
\begin{document}

\maketitle

\begin{abstract}

The rapid advancement of GPU technology has unlocked powerful parallel processing capabilities, creating new opportunities to enhance classic search algorithms. This hardware has been exploited in best-first search algorithms with neural network-based heuristics by creating batched versions of A* and Weighted A* that delay heuristic evaluation until sufficiently many states can be evaluated in parallel on the GPU. But, research has not addressed how depth-first algorithms like IDA* or Budgeted Tree Search (BTS) can have their heuristic computations batched. This is more complicated in a tree search, because progress in the search tree is blocked until heuristic evaluations are complete. In this paper we show that GPU parallelization of heuristics can be effectively performed when the tree search is parallelized on the CPU while heuristic evaluations are parallelized on the GPU. We develop a parallelized cost-bounded depth-first search (CB-DFS) framework that can be applied to both IDA* and BTS, significantly improving their performance. We demonstrate the strength of the approach on the 3x3 Rubik's Cube and the 4x4 sliding tile puzzle (STP) with both classifier-based and regression-based heuristics.
\end{abstract}

\section{Introduction}

There has been significant recent growth in computational resources, particularly in GPUs \cite{dally2021evolution,rotem2022intel}. GPUs have become indispensable for computation-intensive tasks due to their massive parallelism, capable of performing millions of operations simultaneously. Modern CPUs also continue to evolve with enhanced parallel processing capabilities, enabling faster execution of complex algorithms. This advancement in both CPU and GPU technologies has been well exploited in many fields of Artificial Intelligence, especially deep learning \cite{schrittwieser2020mastering,yao2024tree}.

In classical search algorithms, several approaches \cite{li2022optimal, zhou2015massively, agostinelli2019solving,agostinelli2024q} have been developed to enhance search using the parallel processing capabilities of modern GPUs. 
One potential use of deep learning on GPUs is to learn heuristics to guide search. For instance, \citet{li2022optimal} introduced admissible neural network heuristics that compresses a large pattern database (PDB) heuristic with less information loss than standard compression techniques \cite{felner2007compressed, helmert2017variable}. 

A* \cite{hart1968formal} and Weighted A* \cite{pohl1970heuristic} have had batch versions developed which can use GPU-based heuristics more efficiently for optimal \cite{li2022optimal} and sub-optimal search \cite{agostinelli2019solving}.
The batched versions collect states into batches to evaluate their heuristics in a single parallel neural network lookup using the GPU. This technique, called \emph{batch heuristic evaluations}, utilizes GPU parallelism and significantly improves performance over performing individual neural network lookups for each state.

Algorithms like IDA* \cite{korf1985depth} have not yet had batched variants built. The depth-first nature of IDA* search complicates the batching process, because in a depth-first search there are often only a few states available at one time, while batching is most efficient when hundreds of states are available for computing heuristic values in parallel. The same issue applies to algorithms such as Budgeted Tree Search (BTS) \cite{DBLP:conf/ijcai/HelmertLLOS19,sturtevant2020btsguide}, as both IDA* and BTS are built upon performing repeated cost-bounded depth-first searches (CB-DFS).

% \emph{Batch A*} \cite{li2022optimal} was then designed to use the admissible heuristic to find optimal solutions. 

% How can classic search algorithms such as Iterative Deepening A* (IDA*) \cite{korf1985depth} and more modern algorithms like  benefit from this growth in technology? 

 % However, Batch A* remains slower than the plain A* algorithm that uses PDB heuristics. \citet{agostinelli2019solving} introduced the Batch Weighted A* (BWAS) algorithm, which combines weighted A* search \cite{pohl1970heuristic, ebendt2009weighted} with batch heuristic evaluations. %However, BWAS was only compared to a human-engineered solver \cite{rokicki_cube20src} and does not guarantee optimal solutions.

% Because algorithms like BTS and IDA* are both designed around cost-bounded depth-first search (CB-DFS), 
This paper addresses the issue by observing that approaches used for CPU parallelization of CB-DFS make many more states available for batching, and thus enable efficient use of GPU-based heuristics. By re-designing CB-DFS for parallel search, we can then effectively build batch versions of IDA* and BTS.
The effectiveness of the batched versions of IDA* and BTS is shown using both regression-based and classifier-based heuristics on the Rubik's Cube and 15-puzzle domains. Batching is highly effective in improving the performance of these algorithms, resulting in over a $40\times$ improvement in search speed. This improvement in search performance opens the door for further research on improving heuristic quality, and for designing new search algorithms to handle the inadmissible values that we expect to find in large neural network-based heuristics.

\section{Background and Related Work}

In \emph{heuristic search}, the broad task is to find a path in a graph $\{G=\{V,E\},s,g,c,h\}$ from a start state $s \in V$ to a goal state $g \in V$, where $c:E \rightarrow \mathbb{R}^{+}$ is a cost function associated with the edges between states. The heuristic function $h(v)$ provides an estimate of the distance from a state $v$ to the goal $g$. The heuristic is considered \emph{admissible} if, for all states $v$, $h(v)$ does not exceed the true shortest distance $h^{*}(v)$ to the goal. It is \emph{consistent} if, for any two states $a$ and $b$, the heuristic satisfies $h(a) \leq c(a,b)+h(b)$. In large state spaces, the graph $G$ is represented implicitly, meaning it is generated online by expanding states and exploring their neighbors. A* and IDA* are guaranteed to find optimal solutions when the heuristic is admissible \cite{felner2011inconsistent}.

% however, A* can be less efficient when the heuristic is inconsistent, as it may require re-expanding nodes \cite{martelli1977complexity,felner2011inconsistent}.

\subsection{Heuristics}

Pattern Database (PDB) heuristics \cite{culberson1998pattern} and the related merge and shrink framework \cite{helmert2007flexible} are widely utilized, particularly in problems that exhibit exponential growth \cite{Gnad_Sievers_Torralba_2023}. These heuristics abstract the original graph $V$ into a reduced state space $\phi(V)$. In this abstract space, edges between vertices in the original graph are preserved in the abstracted graph, meaning if an edge exists between $v_{1}$ and $v_{2}$ in $V$, a corresponding edge will exist between $\phi(v_{1})$ and $\phi(v_{2})$ in $\phi(V)$. As a result, abstract distances are admissible estimates of distances in $V$. PDBs can reduce the size of the state space exponentially with only a small loss in heuristic accuracy \cite{felner2009abstraction}.
%while maintaining a high degree of heuristic accuracy relative to the true distances \cite{felner2009abstraction}. 
%During the search process, the current state is abstracted, and a ranking function \cite{myrvold2001ranking} is employed to map the abstract state to a unique integer. This integer serves as an index in the lookup table, where the precomputed distance to the abstract goal is retrieved and used as the heuristic value.

Standard PDB compression methods \cite{felner2007compressed, helmert2017variable} primarily treat the PDB as a table of numbers. These methods group entries to reduce the PDB's size and replace each entry in a group with the smallest value in that group to ensure the heuristic remains admissible. One common method is \emph{DIV}, where k adjacent entries are grouped by dividing the index by $k$. Another method is \emph{MOD}, which combines entries offset by $\frac{m}{k}$ in a PDB with $m$ total entries using the modulo operator. 

Neural networks have been used to compress PDB heuristics. ADP \cite{samadi2008compressing} used a range of techniques to ensure admissibility. These included a unique loss function to penalize overestimation, a decision tree to partition states, and employing ANNs only for the resulting subsets of states. Any states with inadmissible heuristics were then stored in a hash table. ADP was developed prior to current hardware and is orthogonal to work in this paper on designing more efficient algorithms that use these heuristics.
%The batch algorithms described in this paper are orthogonal to the use ADP for heuristics, but ADP would likely need to be redesigned to work efficiently on modern GPU architectures.

% In practice, it would have to be re-designed similar 
% ADP was designed with a custom CPU-based neural network implementation prior to widespread use of GPUs. It is not practical to directly compare to modern GPU approaches, as the implementation is not available, and would have to be modernized to run on current GPUs, essentially resulting in a different algorithm.

\citet{li2022optimal} also studied approaches for learning admissible heuristics. They treated the learning of heuristics as a classification problem rather than using regression, because in NP-complete problems the solution length is polynomial, meaning a small number of heuristic values (classes), while the state space is exponential. Admissibility is guaranteed in two ways. First, because the heuristic classes are ordered, the classification quantile used for the predicted class can be adjusted to ensure admissibility. Second, an ensemble of neural networks can be trained, with the minimum value from the ensemble used as the prediction. This technique leverages the diversity of the ensemble to produce an admissible heuristic.

% . This works because the ordinal nature of the heuristic classes. 

% For instance, a quantile of 0.5 returns the first class that has a cumulative probability of 0.5 from small to large classes. By selecting a lower quantile, the network can generate conservative estimates that are less likely to overestimate the cost to the goal. The second is to use an ensemble of neural networks. Each network in the ensemble generates its own heuristic estimate, and the minimum of these estimates is used as the final heuristic value. This technique leverages the diversity of the ensemble to produce an admissible heuristic.

\subsection{Search Algorithms}
%A*, IDA* and BTS}
A variety of search algorithms can be used for solving shortest path problems. 

{\bf IDA*} combines a cost-bounded depth-first search (CB-DFS) with iterative deepening to find optimal solutions. IDA* performs a series of depth-first searches, each with an increasing cost threshold, which is determined by the current path cost and the heuristic estimate to the goal. IDA* increases the cost threshold conservatively so it can terminate once the goal is found. IDA* is memory-efficient, as it only requires storage for the current path. However, if the number of nodes in each iteration does not grow fast enough, the sum of costs of the iterations may outweigh the cost of the final iteration, increasing the total expansions to $O(N^2)$. 

{\bf BTS} \cite{DBLP:conf/ijcai/HelmertLLOS19} is identical to IDA* when the iterations grow by a suitable constant factor, but reduces the worst-case overhead to $O(N \log C^*)$, where $C^*$ is the optimal solution cost. As in IDA*, BTS repeatedly invokes CB-DFS, just with more aggressive thresholds and additional node expansion limits. Thus, both IDA* and BTS can be improved through GPU parallelization of CB-DFS.

% However, it can be less efficient than A* in terms of node expansions because A* performs duplicate detection to avoid repeatedly generating the same subtrees. \citet{DBLP:conf/ijcai/HelmertLLOS19} propose IBEX, an approach that iteratively adjusts the expansion budget and solution cost limit, leading to the development of new graph and tree search algorithms called Budgeted Graph Search (BGS) and Budgeted Tree Search (BTS). 

% In these algorithms, the number of expansions is $O(n \log C^*)$, where $C^*$ represents the optimal solution cost.

{\bf AIDA*} is designed to parallelize the CB-DFS portion of IDA*. While some forms of depth-first search cannot be parallelized well \cite{reif1985depth}, AIDA* exploits the structure of the problem to create independent subproblems. AIDA* 
% Breaking up the search tree into different pieces dynamically during search is difficult. AIDA* addresses this by first creating many small pieces of work. These can then be solved independently by each thread/CPU. This paper exploits this technique when adapting for batch operations on the GPU.
has three phases for CPU parallelization of the CB-DFS portion of IDA*, as shown in Figure \ref{fig:AIDA*}:

%\begin{itemize}
    
    \subsubsection{Initial Data Partitioning} In this phase
    %all processors work in parallel to redundantly expand the initial few levels of the search tree using an iterative-deepening 
    a single search is performed from the root of the search tree where the leaves of the search have cost greater than the current cost threshold. The threshold is increased until there are sufficiently many leaves above the cost threshold to efficiently perform the following algorithmic steps. At the end of this phase, duplicate nodes are eliminated from the frontier nodes.
    
    \subsubsection{Distributed Node Assignment}
    Next, the leaf nodes from the data partitioning are put into a shared work queue for distributed assignment of work. Path information is maintained to prevent the search from returning to the parents.
    
    % Each processor is assigned a set of nodes from the first phase, expanding these nodes further to create a larger set. At the end of this phase, the local node arrays of the individual processors make up the global work queue used for dynamic load balancing in the third phase. Each work packet in the work queue is the root of a subtree to be explored in the next phase.
    
    \subsubsection{Distributed CPU Search} In this phase, processors independently search the {\em subtrees} found below the states in the shared work queue, maintaining the lowest unexpanded $f$-cost in the search. In each iteration either a solution is found and the search terminates, or no solution is found, and the search is repeated at the next cost threshold.
    % and asynchronously explore different subtrees of the search space using an iterative-deepening strategy. Each processor continues working on its assigned subtree until either a solution is found or the subtree is fully explored. 
    Since the work queue is shared among all threads, idle threads dynamically retrieve unprocessed work from the queue as they complete their current work.
    % , ensuring continuous utilization of all processing resources.
    
    % In this phase, processors independently and asynchronously explore different subtrees of the search space using an iterative-deepening strategy. Each processor works on its assigned subtree until either a solution is found or the entire search space is explored. As the work queue is shared among threads, all threads stay occupied by taking unprocessed work packets whenever they finish their current work packet.
    
%\end{itemize}

\begin{figure}[t]
    \centering
    \includegraphics[width=0.75\linewidth]{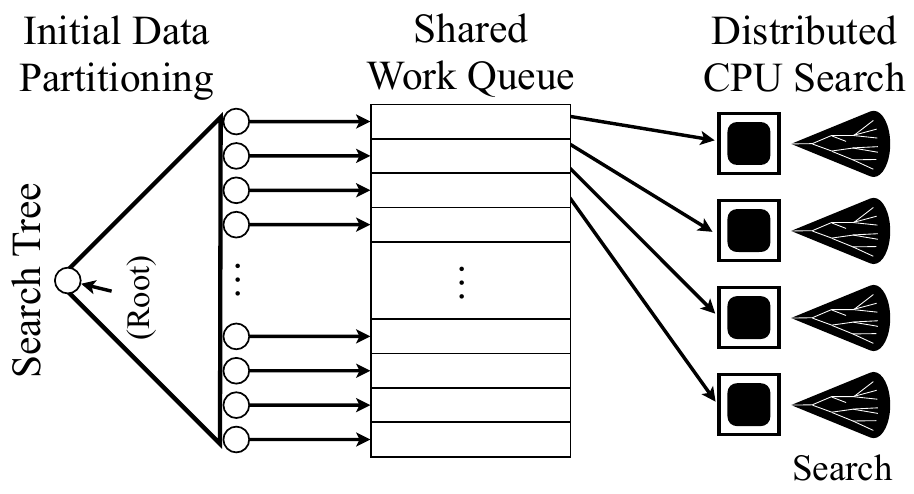}
    \caption{The structure of AIDA*.}
    \label{fig:AIDA*}
\end{figure}

%Work has shown that parallelizing each iteration of IDA* independently does not lead to large speedups \cite{korf_tbd}. 

\subsection{GPU Architecture}

GPU parallelization is fundamentally different than CPU parallelization. While in CPU parallelization we need independent subtasks, GPU parallelization works best with correlated tasks that are solved with the same instructions but different data.
GPUs use a hierarchical execution model that enhances parallel processing efficiency. At the heart of this model is the \emph{kernel}, which executes across multiple thread blocks. Each block contains warps, groups of threads that execute the same instructions in parallel. Threads within a block can share data through on-chip memory, but blocks themselves work independently.

The architecture of GPUs is designed to support this model, with each GPU featuring multiple \emph{Streaming Multiprocessors} (SMs). These SMs include on-chip memory, \emph{shader cores}, and \emph{warp schedulers}. Shader cores handle arithmetic and logic operations, while warp schedulers manage the execution of warps, selecting which ones are ready to execute in each cycle. GPUs can be connected to systems either via the PCI-E bus, as in Ubuntu servers, or integrated on the same processor package as the CPU, like in Apple’s M1 or M2 chips. When connected through PCI-E, GPUs typically have dedicated memory, necessitating explicit data transfers between CPU and GPU memory.

%\ns{Delete this paragraph if we need space} 
%CPUs excel at tasks with complex control flows and high instruction-level parallelism, making them ideal for sequential tasks. 
%GPUs are optimized for handling large, independent data sets, which makes them especially effective for neural network computations. In convolutional neural networks (CNNs), for example, a weight matrix and a subset of inputs are assigned to a thread block, where threads compute convolution outputs by sharing data through fast on-chip memory. Since different convolution operations are independent of each other, they can be executed in parallel {\em batches} across multiple thread blocks~\cite{jeon2023gpu}. This data-level parallelism gives GPUs a significant advantage over CPUs in neural network training and inference.

% \subsection{Batch A*}

% Batch A* \cite{li2022optimal} is a modification of A* that allows it to delay node heuristic evaluation when using neural network heuristics. When nodes are generated, they are placed into a batch queue instead of the open list. Once the batch queue is full, the heuristics of the states in the batch are evaluated in parallel, and then they are inserted into the A* open list for search to continue. 

\subsection{GPU Parallelization of Algorithms}

In recent years, several parallel versions of the IDA* algorithm have been introduced. AIDA* \cite{reinefeld1994aida}, a highly parallel iterative-deepening search algorithm, was designed for large-scale asynchronous Multiple Instruction, Multiple Data (MIMD) systems. The algorithm partitions the search space and processes it asynchronously across multiple CPU processors. Taking a different approach, \citet{horie2017block} investigate the parallelization of IDA* on GPUs using a block-based approach. The proposed Block-Parallel IDA* (BPIDA*) assigns subtrees to blocks of threads that execute on the same \emph{streaming multiprocessors (SMs)}. BPIDA* takes advantage of local shared memory of within a SM to reduce warp divergence and improve load balancing. 

% AIDA* involves three phases: an \emph{initial data partitioning} where processors expand initial tree levels to generate sufficient nodes, followed by a \emph{distributed node expansion} to create work packets, and finally, an \emph{asynchronous search} phase where processors independently explore subtrees until solutions are found.

% A*+IDA* avoids duplicate node checking during the IDA* phase and orders the frontier nodes by increasing heuristic values, which significantly improves performance.

% Other parallel versions of IDA* combine IDA* with other search algorithms to enhance its performance. \citet{bu2019+} introduced a novel combination of the A* and IDA* algorithms, named \emph{A*+IDA*}. This hybrid approach runs A* until memory is nearly exhausted and then switches to IDA* to generate subtrees in parallel, starting from the frontier nodes generated by A*.  Similarly, \citet{messa2022pea} presented a new hybrid search algorithm that combines the Partial Expansion A* (PEA*) \cite{yoshizumi2000partial} and Iterative Deepening A* (IDA*) algorithms to address the limitations of both algorithms in memory-constrained environments. The proposed algorithm operates in two phases: PEA* is used in the first phase to reduce the number of nodes stored in memory, and IDA* is applied in the second phase like A*+IDA*. This hybrid approach is shown to improve performance by reducing the number of expansions and increasing coverage compared to the A*+IDA* algorithm.

GPU parallelism has also been effectively utilized in other search algorithms. \citet{zhou2015massively} introduced the first parallel variant of the A* search algorithm called GA* that leverages the computational power of GPUs. GA* uses multiple parallel priority queues to manage the Open list, enabling the simultaneous extraction and expansion of nodes across GPU threads. The heuristic computations are also parallelized across the GPU cores to optimize performance further. GA* is up to 45x faster than a traditional CPU-based A* implementations in large and complex search spaces. Q* search \cite{agostinelli2024q} employs deep Q-networks to calculate combined transition costs and heuristic values for child nodes in a single forward pass, thereby eliminating the need to explicitly generate them. 

% Other approaches, such as Batch A* and Batch Weighted A*, use GPUs to perform heuristic evaluations in parallel for batches of states to accelerate the heuristic evaluations.

\citet{edelkamp2009parallel} investigate bitvector-based search algorithms, where the GPU's parallel processing capabilities allow for the efficient handling of state expansion and duplicate detection. The GPU is also employed for ranking and unranking permutations, computing hash functions, and managing the search frontier, all of which are parallelized to exploit the GPU's architecture. Meanwhile, other approaches have formulated A* and Weighted A* algorithms as differentiable and end-to-end trainable neural network planners \cite{yonetani2021path, archetti2021neural}. These data-driven approaches, rather than learning the heuristic function, take a raw image as an input and convert it to a guidance map by an encoder. 
% The differentiable A* algorithm then performs a search using the guidance map to produce a path, which is compared to the ground-truth path to compute a loss. This loss is used to train both the encoder and the differentiable A* algorithm. 
GPUs have also been instrumental in learning heuristics. For example, \citet{li2022optimal} developed an admissible heuristic for the sliding tile puzzle (STP) and TopSpin using an ensemble of neural networks and classifier quantiles. Other approaches \cite{agostinelli2021obtaining, arfaee2011learning, thayer2011learning, pandy2022learning} have also focused on heuristic learning, though without guaranteeing admissibility. 

% Recently, \citet{meng2024llm} synergized the traditional A* algorithm with global insights from Large Language Models (LLMs) to propose LLM-A*. This approach uses waypoints generated by LLMs to direct the path-searching procedure. 
% , resulting in a notable decrease in computational and memory costs.

%\section{Problem Definition}

%\ef{This section needs your attention.}
% \ns{I will refine this definition to make it clear that our heuristic search problem is the sub-problem of neural-based heuristics.}

\section{CB-DFS for Neural Heuristics}

In a heuristic search problem, the heuristic, $h$, can come from any source. This paper studies the {\em neural heuristic search} problem. In this problem, $h$ is a neural heuristic. This means that $h \in H_{NN}$, where $H_{NN}$ is the set of all heuristics that are computed by a neural network. In recent work, heuristics in $H_{NN}$ have been learned from PDB heuristics \cite{li2022optimal}, and general techniques for heuristic learning have been described \cite{khandelwal2024towards}.
The aim of this paper is to improve algorithms that make use of such heuristics.
%but we do not distinguish the source of the heuristic or how it was trained.
Experimental results evaluate the quality of heuristics we currently have access to, but we are working under the assumption that neural network heuristics will continue to improve, and thus improved algorithms will be broadly beneficial.

We now introduce the SingleGPU Batch CB-DFS algorithm, which can be used with both IDA* to create Batch IDA* and BTS to create Batch BTS.
We provide pseudocode and prove theoretical properties. Then, we discuss the MultiGPU CB-DFS algorithm. 
% Batch BTS has a different outer loop, but can use CB-DFS with an additional expansion limit. Since Batch IDA* and Batch BTS perform identically in our setting, we present the ideas using the simpler Batch IDA* setting. But, our approach parallelizes the common CB-DFS used by both, not the top-level control of IDA* or BTS.

\begin{figure}[t]
    \centering
    \includegraphics[width=0.7\linewidth]{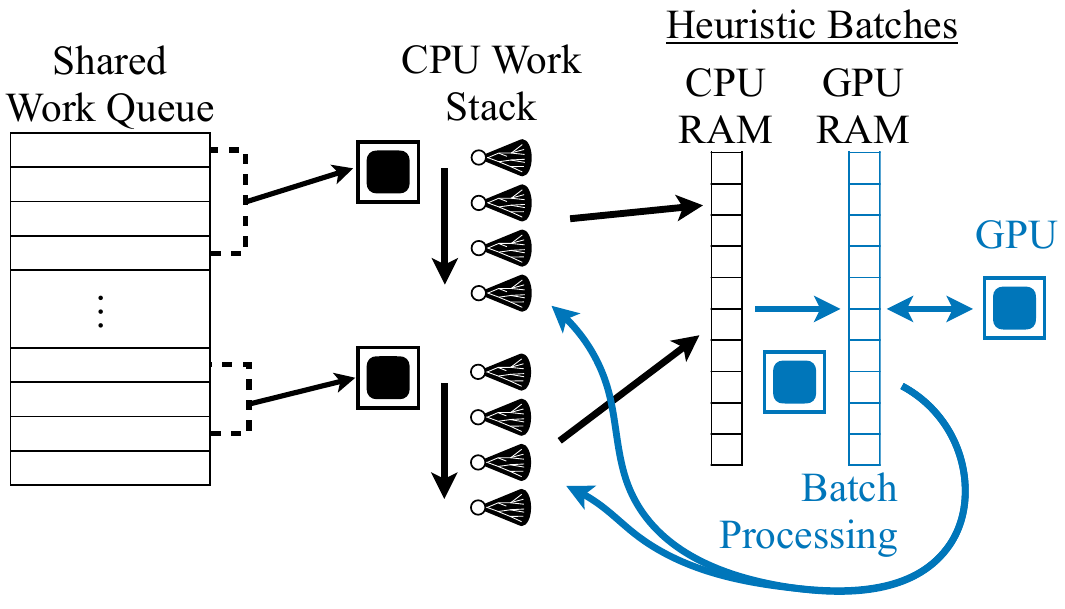}
    \caption{The structure of SingleGPU CB-DFS.}
    \label{fig: SIngleGPU IDA*}
\end{figure}

\begin{algorithm}[t] \small
\caption{Batch IDA*}
\label{alg: batch IDA}   
\begin{algorithmic}[1] %[1] enables line numbers
\State \textbf{Input:} $h_{M}$, start $s$, goal $g$, $d_{init}$
\State works, history, batch  $\gets \{ \}$ 
\State bound $\gets$ $h_{M}(s,g)$
\State foundSolution $\gets$ false
\State \textbf{GenerateWork}($s$,$d_{init}$,history) \label{com: generate work}
\State start a batch-processing CPU thread executing  \textbf{ProcessBatch}()      
\While{not foundSolution}
    \State start search CPU threads executing \textbf{CB-DFS}(bound) \label{code: search}
    \State \textbf{wait} for CB-DFS threads to end
    \State \textbf{UpdateThreshold}(bound)      \label{code: update bound}
\EndWhile   
\end{algorithmic}
\end{algorithm}

% \begin{algorithm}[t] \small
% \caption{Generate works for parallel CB-DFS}
% \label{alg: generate work}   
% \begin{algorithmic}[1] %[1] enables line numbers
% \Function{}{}\textbf{GenerateWork}($s$,$d_{init}$,history)
%     \If{size(history)=$d_{init}$}
%         \State work $\gets$ Work($s$)
%         \State work.init $\gets$ history
%         \State works.add(work) 
%         \State return
%     \EndIf
%     \State actions $\gets$ env.GetActions($s$)
%     \For{each action in actions}
%         \State history.add(action) 
%         \State $s$ $\gets$ env.ApplyAction($s$,action)
%         \State \textbf{GenerateWork}($s$,$d_{init}$,history)
%         \State $s$ $\gets$ env.UndoAction($s$,action)
%         \State history.pop()
%     \EndFor 
% \EndFunction   
% \end{algorithmic}
% \end{algorithm}

\begin{algorithm}[t] \small
\caption{Parallel CB-DFS}
\label{alg: parallel search}   
\begin{algorithmic}[1] %[1] enables line numbers
\Function{}{}\textbf{CB-DFS}(bound)
    \State \textbf{Initiate} stack[workNum]
    \State \textbf{Initialize} terminated $\gets$ [0, 0, \dots, 0] \textbf{of length} workNum
    \State counter $\gets$ 0
    \State miss $\gets$ 0
    \For{$i = 1$ \textbf{to} workNum}               \label{code: stacks filling starts}
        \State stack[i] $\gets$ works.pop() 
    \EndFor                                         \label{code: stacks filling ends}
    \While{miss $<$ workNum} \hfill \ \ \   \ \ \ \ \ \ \  $\triangleright$ wait for all works to 
        \If{stack[counter] is \textit{done}} \ \ \ \ \ \ \ \ \ \ \  \ \ \ \ \ be done
            \If{works is not \textit{empty}}
                \State stack[counter] $\gets$ works.pop()   \label{code: new work}
            \Else
                \State miss $\gets$miss+1
                \State terminated[counter] $\gets$ 1
            \EndIf
        \EndIf
        \If{not terminated[counter]}
            \State \textbf{DoIteration}(stack[counter],bound)
        \EndIf
        \State counter $\gets$ counter+1
    \EndWhile
\EndFunction   
\end{algorithmic}
\end{algorithm}

\subsection{SingleGPU Batch CB-DFS}

Batch heuristic lookups in best-first search algorithms like A* are relatively easy because once the search gets started, there are many states waiting in the open list. But, in a cost-bounded depth-first tree, the depth-first search cannot continue until the $f$-cost of all children is known. Thus, to get sufficient states for batching heuristic lookups, we must either have a large enough branching factor to permit efficient batching, search speculatively beyond the cost limit, or run several CB-DFS searches in parallel so we can batch states across subtrees. This paper explores the last approach.

The overall structure of \emph{SingleGPU CB-DFS} is illustrated in Figure \ref{fig: SIngleGPU IDA*}. It begins similarly to AIDA* by generating a {\em Shared Work Queue} with subtrees that can be searched independently. However, instead of having each CPU take a single subtree, the CPUs now fill a work stack with multiple subtrees. A single expansion is performed on the first subtree, which results in a set of children that need their heuristics evaluated before the search can continue. These states are placed into a shared {\em heuristic batch queue} on the CPU, and then work continues on the next subtree in the work stack, generating more states until the heuristic batch queue is full. If multiple CPUs are available, they can execute this process in parallel.

Once the heuristic batch queue is full, a dedicated {\em batch-processing} thread manages the batch evaluation of states. This involves (1) copying the data to GPU memory, (2) evaluating the batch on the GPU using the model $h_{M}$, and then (3) returning the resulting heuristics to each of the CPUs. 
%
% The newly generated nodes from all threads are collected in a vector called \emph{batch}. When the batch reaches its maximum size, a \emph{batch-processing thread} manages it in three steps: copying the batch from the CPU to the GPU, performing heuristic lookup using the model $h_{M}$, and copying the heuristic evaluations back to the CPU. 
The batch is then cleared, allowing the next set of generated states to be processed. More than one batch is maintained in RAM so the CPUs can continue to search in parallel to the batch evaluation.
%
%
% extends AIDA* by leveraging batch heuristic evaluations using a learned heuristic $h_{M}$. The algorithm builds the CB-DFS tree in a manner consistent with AIDA*. It begins by replicating the first two phases of AIDA* to generate a \emph{Work Queue} for parallel processing. Each piece of work in the Work Queue represents a subtree rooted at a node generated during the initial phase. The remaining procedure of the algorithm is generally depicted in Figure \ref{fig: SIngleGPU IDA*}. In each CB-DFS call, every thread retrieves several pieces of work from the Work Queue and stores them in a stack. Threads process each piece of work in depth-first order, generating new nodes as they traverse its subtree. Once a piece of work is completed---meaning all nodes in its subtree have been generated---the thread takes on a new one. 
The CB-DFS completes when the shared work queue is exhausted and
all threads have completed any searches remaining in their work stacks. This completes the high-level CB-DFS.
% their work stacks under the current cost threshold and the Work Queue is empty.

The pseudo-code showing batch CB-DFS and how it is integrated into SingleGPU Batch IDA* is outlined in Algorithm \ref{alg: batch IDA}.
The \texttt{GenerateWork} function (line \ref{com: generate work}) is called once before the main search loop begins, instructing an initial search tree to depth $d_{init}$.
% This function is presented in Algorithm \ref{alg: generate work}, which constructs the CB-DFS tree up to a depth of $d_{init}$. 
BatchIDA* then invokes \texttt{CB-DFS} in parallel across CPU threads given the cost threshold of the high-level search (line \ref{code: search}). The cost threshold is updated (\texttt{UpdateThreshold} function) after the CB-DFS with the current threshold is completed (line \ref{code: update bound}).

% The batch \texttt{CB-DFS} function (provided in Algorithm \ref{alg: parallel search}) has one primary change compared to \texttt{CB-DFS} function in AIDA*. In AIDA*, each thread processes one piece of work at a time and does not begin another until the current one is fully completed. However, applying this method to Batch IDA* limits the maximum batch size:

% \begin{equation}
%     batch \ size \leq \max_{s \in S}b(s)*n,
% \end{equation}

% where $b(s)$ is the branching factor of state $s$ and $n$ is the number of CPU threads. This limitation occurs because heuristic evaluations for successors aren't immediately ready, forcing the CB-DFS to wait for the completion of batch evaluation. This is restrictive, as large batch sizes are needed to fully utilize GPU parallelism. To overcome this, each thread needs to process multiple pieces of work altogether (Algorithm \ref{alg: parallel search}). For this purpose, each thread maintains a work stack, with the number of pieces specified by \texttt{workNum} (lines \ref{code: stacks filling starts}-\ref{code: stacks filling ends}). This approach allows for larger batches by adjusting \texttt{workNum} while maintaining the same number of threads. 

The \texttt{DoIteration} function (Algorithm \ref{alg: do iteration}) expands subtrees one at a time. While executing the \texttt{DoIteration} function, if a subtree cannot progress further---meaning the heuristic evaluation is not yet available for the top node (line \ref{code: switch to another work}) -- the thread switches to the next subtree in its stack. Upon revisiting the same subtree, the thread resumes from the same node if its heuristic evaluation is ready; otherwise, it switches again to the next subtree. 

The batch processing is done via the \texttt{ProcessBatch} function, which is not shown. The most important detail of this algorithm is that it has a timeout after which a batch is evalauted even if it is not completely full. This is important when very little work is available at the end of an iteration, and impacts the integration with BTS.

% as shown in Algorithm \ref{alg: process batch}. In this function, we consider a \texttt{timeout} which invokes batch processing even if the batch is not full. This mechanism helps when only a few pieces of work remain and the number of generated nodes is insufficient to fill the batch.

\begin{algorithm}[tb] \small
\caption{Subtree expansion}
\label{alg: do iteration}   
\begin{algorithmic}[1] %[1] enables line numbers
\Function{}{}\textbf{DoIteration}(work, bound)
    \State newStatesFound $\gets$ false
    \While{not newStatesFound}
        \State $s \gets$ work.GetTop()
        \If{$h_{M}(s)$ is not \textit{ready}} \label{code: switch to another work}
            \State return
        \EndIf
        \If{$h_{M}$($s$,goal)$<$bound}
            \State newStatesFound$\gets$true
        \EndIf
    \EndWhile
    \State actions $\gets$ env.GetActions($s$)
    \For{each action in actions}
        \State $s_{next}$ $\gets$ env.ApplyAction($s$, action)
        \State work.add($s_{next}$) 
        \State batch.add(\textbf{TensorRepresentation}($(s_{next})$))  \label{code: representation}
    \EndFor
\EndFunction
\end{algorithmic}
\end{algorithm}

% \begin{algorithm}[t] \small
% \caption{Process of heuristic batch}
% \label{alg: process batch}   
% \begin{algorithmic}[1] %[1] enables line numbers
% \Function{}{}\textbf{ProcessBatch}()
%    \State \textbf{wait} for the batch to get full or for the timeout to expire
%    \State \textbf{copy} batch from CPU to GPU
%    \State results $\gets$ $h_{M}$(batch)
%    \State \textbf{copy} results from GPU to CPU
%    \State return results
% \EndFunction
% \end{algorithmic}
% \end{algorithm}

Overall, the performance of Batch IDA* hinges on the balance between two key events: \textit{filling the batch} and \textit{processing the batch}. The time required to fill the batch depends on the speed of the CPU CB-DFS threads, while processing time is determined by the GPU's efficiency and the transfer time via the PCI-E bus. If batch-filling is faster, CB-DFS threads may remain idle, either waiting to add new nodes or for heuristic evaluations. Conversely, if batch processing is faster, the GPU is underutilized. Therefore, these two processes must be balanced to ensure optimal resource utilization during the CB-DFS.

\subsection{Batch CB-DFS with Multiple GPUs}

In most servers, multiple GPU devices are available, although the design shown thus far only incorporates a single GPU. If there are many fast CPUs available, they can end up waiting on the GPU for heuristic evaluations.

% ; however, SingleGPU Batch CB-DFS is limited to using one GPU device. This means that even if faster CPUs are employed for the search, overall performance gains are minimal because the batch processing time remains unchanged, forcing CB-DFS threads to wait until the model $h_{M}$ is ready to take a new batch.
% A potential solution involves loading copies of $h_{M}$ onto GPU, allowing multiple batches to be processed simultaneously. However, this can lead to oversubscription, where the GPU lacks sufficient SMs to efficiently handle all batches, potentially increasing runtime. 

We scale Batch CB-DFS to multiple GPUs by loading $h_{M}$ onto each GPU, and running one batch-processing thread per GPU. CB-DFS threads are then uniformly assigned to the available GPUs, with each GPU processing the batch of nodes generated by its respective CB-DFS threads. This setup maximizes CPU utilization by increasing the number of batches while ensuring sufficient GPU resources are available to process them concurrently. 

\subsection{Correctness of Batch CB-DFS}

In this section, IDA*, BTS, and AIDA* are all implemented with a CB-DFS sub-routine. Thus, as long as Batch CB-DFS searches the same tree as a classic CB-DFS search, we can substitute Batch CB-DFS without impacting the correctness of these algorithms. The batching operation does not impact the fact that the cost limit in the search is bounded, because the cost bound is still respected in the subtree expansions. Detailed proofs are found in supplementary material.

\subsection{Batch BTS Enhancements}

One important feature of the BTS algorithm is that it avoids the worst case where the IDA* tree grows too slowly, which can happen when there are many unique $h$-costs (e.g. as a result of using a regression-based heuristic). BTS guarantees that in each iteration the number of nodes will grow exponentially. It does this by establishing a limit of how fast the tree should grow in the next iteration (measured by node expansions), and then does a search over the cost bounds to find the cost bound that achieves this growth. Thus, in Batch BTS we must impose an additional node expansion limit in the CB-DFS. For efficiency, we do not check this after every expansion, but often enough to ensure that the search will terminate within a constant bound of the limit.

% It does this by, within each iteration, running multiple searches with an exponentially growing cost bound. It does this to find the bound that ensures exponential growth in node expansions. These searches are subject to a node expansion limit

Next, we note that using classic heuristics, there is little cost for running many small iterations at the very beginning of search. But, with neural heuristics, doing so is very slow, because the search never grows large enough to exploit large batch sizes. Thus, a very small iteration of BTS can be a thousand times more expensive than a larger iteration.

To mitigate this, we make two additional modifications in Batch BTS. First, instead of setting the initial node limit to a small constant, such as 1, we use a much larger constant -- in our case 10{,}000 expansions. This allows the initial searches to be significantly larger. But, setting the node limit on its own is not sufficient, we also need to raise the cost bound of the first search. Thus, instead of using \(h(\text{root})\), we use \( h(\text{root}) + 1 \) instead. Neither of these changes impact the asymptotic behavior of the algorithm. Because the overheads are just large constants, on sufficiently large problems these modifications would not be necessary. But, in practice, they significantly improve performance.

% to use a more aggressive initial cost estimate for the next cost layer. This does not impact the theoretical properties of BTS, but avoids the large cost of the initial searches. In particular, we just perform the initial search with a bound of \( h(\text{root}) + 1 \) instead of \(h(\text{root})\). Additionally BTS starts with a budget of 1 expansion for the first search, but we modify it to use 10{,}000 expansions instead. Because this is just a larger constant, it doesn't impact the asymptotic behavior of the algorithm. On sufficiently large problems these modifications would not be needed, but on smaller problems they are necessary to achieve acceptable performance.

% increase the initial cost bound to \( h(\text{root}) + 1 \), which helps generate more nodes early on and better utilize the batch size. Additionally, in Batch BTS, we impose a node expansion limit of 10{,}000 in the first iteration (instead of using a very large number) to prevent some cases where even a slight increase in the cost bound results in an unbounded number of node.

\subsection{Memory Cost Analysis}

The memory cost of IDA* is $\mathcal{O}(d)$, where $d$ is the solution depth~\cite{korf1985depth}. For AIDA*, which utilizes $n$ CB-DFS threads, the memory cost increases to $\mathcal{O}(dn)$ as it concurrently expands $n$ subtrees. In Batch IDA* and Batch BTS's CB-DFS, each thread manages $k$ subtrees in a stack, resulting in a memory cost of $\mathcal{O}(dnk)$. We know that $n$ and $k$ are related to hardware speed on any domain. So, these parameters are expected to be constant as hardware is constant. Additionally, while the heuristic evaluations for unexpanded nodes are stored temporarily, memory is freed after their expansion, keeping the total memory cost at $\mathcal{O}(dnk)$.

AIDA*, Batch IDA*, and Batch BTS require loading their respective heuristics into memory. If AIDA* uses a PDB heuristic $h$ and Batch IDA* uses a learned model $h_{M}$, the total memory cost for AIDA* is $\mathcal{O}(nd+size(h))$ and for Batch IDA* it is $\mathcal{O}(dnk +size(h_{M}))$. While PDB heuristics typically fit into memory and thus their cost is constant, large PDBs, such as the 12-edge Rubik's cube PDB (500 GB), exceed memory capacity, making them impractical for AIDA*. 
This limitation underscores the importance of further work on compressing PDB heuristics using deep learning and designing search algorithms, such as the CB-DFS used by Batch IDA* and Batch BTS, to use them efficiently.

\begin{table*}[t]
\small
\centering
\begin{tabular}{
    c
    S[table-format=4.0]
    S[table-format=3.2]
    S[table-format=8.0]
    S[table-format=8.0]
    % S[table-format=4.0]
    S[table-format=3.2]
    S[table-format=6.0]
    S[table-format=6.0]
}
\toprule
\multicolumn{8}{c}{Average Performance Over 50 Instances} \\
\cmidrule(l){1-8}
\headercell{Algorithm} & {{Batch Size}} &\multicolumn{3}{c} {3x3 Rubik's Cube} &\multicolumn{3}{c} {4x4 STP}\\
\cmidrule(l){3-5}
\cmidrule(l){6-8}
& & {Time (s)} & {Expanded} & {Generated}  & {Time (s)} & {Expanded} & {Generated} \\ 
\midrule
  SingleGPU Batch IDA*  & 1              &  \multicolumn{1}{c}{$>$ 100}      & {-}            & {-}                    & 16.80        & 104869       & 225542      \\
  SingleGPU Batch IDA*  & 8              &  \multicolumn{1}{c}{$>$ 100}      & {-}            & {-}                    & 3.97         & 93610        & 201217      \\
  SingleGPU Batch IDA*  & 80             &  58.07          & 902026          & 11966615                & 0.94         & 93575        & 201145                  \\
  SingleGPU Batch IDA*  & 800            &  5.46           & 908522          & 12054452               & 0.54         & 104945       & 225704                  \\
  SingleGPU Batch IDA*  & 8000           &  3.46           & 900725          & 13734532              & 0.71         & 115190       & 248108                  \\
  2GPU Batch IDA*       & 800            &  2.78           & 982864          & 13044563               & 0.44         & 104833       & 225468                  \\
  2GPU Batch IDA*       & 8000           &  2.51           & 936426          & 13172161              & 0.64         & 115876       & 249264                  \\
  Batch A*              & 1000           &  23.58          & 779495          & 14030893              & 0.18         & 17892        &  41465                  \\
  % \cmidrule(l){1-9}
  % AIDA*                 & {-}              & 1.95            & 854106          & 11329403       & {-}          & 0.02          & 32436        &  70359              \\
  % IDA*                  & {-}              &  2.99           & 784726          & 10402717       & {-}          & 0.02          & 26958        &  58584              \\  
\bottomrule
\end{tabular}
\caption{Summary results comparing Batch IDA* and Batch A* using the same PDB heuristic.}
\label{table: same tree}
\end{table*}

% STP results that are not used anymore
% & 100        & 0.91          & 104833       & 225468
% & 1000       & 0.52          & 104670       & 225118
% & 1000       & 0.43          & 100372       & 215815  2-gpu

% \begin{table}[h]
% \centering
% \begin{tabular}{
%     l                                % Algorithm
%     S[table-format=4.0]              % Batch Size (up to 2000)
%     S[table-format=3.2]              % Time (s) (up to 999.99)
%     S[table-format=9.0]              % Expanded (up to hundreds of millions)
% }
% \toprule
% \multicolumn{4}{c}{Average Performance Over 50 Instances} \\
% \cmidrule(l){1-4}
% Algorithm  & {Batch Size} & {Time (s)} & {Expanded} \\ 
% \midrule
% Batch BTS        & 1        & {-}       & {-}            \\
% Batch BTS        & 10       & 3.47      & 63777         \\
% Batch BTS        & 100      & 1.12      & 64454         \\
% Batch BTS        & 1000     & 1.05      & 75874         \\
% Batch BTS        & 2000     & 0.78      & 65058         \\
% Batch IDA        & 1        & {-}       & {-}           \\
% Batch IDA        & 10       & {-}       & {-}           \\
% Batch IDA        & 100      & {-}       & {-}           \\
% Batch IDA        & 1000     & 58.74     & 117541270     \\
% Batch IDA        & 2000     & 53.17     & 118054590     \\
% \bottomrule
% \end{tabular}
% \caption{Performance on 3x3 Rubik's Cube instances with the regression model on instances created with a random walk of length 8.}
% \label{table:BatchBTS-vs-BatchIDA (8)}
% \end{table}

\section{Experimental Results}

The primary purpose of the experimental results is to evaluate whether or not the batched version of CB-DFS can effectively reduce the overhead of neural heuristics through GPU parallelism. We evaluate this in both the  Batch IDA* and Batch BTS algorithms with neural heuristics built on both classifiers and regression, as regression-based heuristics are expected to fail with IDA*, but not BTS. We further evaluate the impact of GPU count and neural heuristic size on performance. Additional experiments examining the effects of hardware and algorithmic hyperparameters are included in the supplementary material.

% We evaluate the Batch IDA* algorithm to understand its performance and how it compares to existing algorithms. 
% Our first two experiments evaluate whether our Batch IDA* design is effective in reducing the computational overhead of neural heuristics, and how the size of the neural heuristic model impacts performance. This is followed by an analysis of the impact of different hyperparameters and hardware. Finally, we perform a comparison between classical and neural heuristics to evaluate their current performance differences.

% evaluate the effectiveness of \ns{How many threads are we using? Did we try varying thread count?} Batch IDA* in reducing the overhead of neural heuristics. We also compare to Batch A*, which can also use neural heuristics effectively.
% present a comparative analysis of the Batch IDA* algorithm against IDA*, AIDA*, and Batch A*. We vary the batch size in Batch IDA* to demonstrate the effectiveness of the batching strategy.

 % Additionally, we assess the impact of hyperparameters and evaluate the algorithm's performance across different model sizes, hardware resources, and numbers of GPUs to understand how these factors influence the results.

\subsection{Experimental Setup}

We conduct our experiments across two domains: the 3x3 Rubik's cube and the 4x4 sliding tile puzzle (STP). We use relatively modest computational resources in our experiments. Specifically, we utilize a server equipped with 32 AMD Ryzen Threadripper 2950X CPU cores and two NVIDIA GeForce RTX 2080 Ti GPUs running CUDA version 12.4. The Batch algorithms are implemented in C++, and Libtorch is employed for the deep learning components. Any CPU parallelization uses one thread per CPU core. For batched algorithms, we use a 4-millisecond timeout to trigger batch processing when an insufficient number of states are available to fully populate the batch. For the standard algorithms, IDA* and AIDA*, we use $h_{\text{PDB}}$ which is the 8-corners PDB for Rubik's cube, and the sum of  1-7 and 8-15 tile additive PDBs for STP. For the STP, we use standard benchmark instances \cite{korf1985depth}. For the Rubik's Cube domain, unless otherwise stated, we generated instances of solution length 11 using random walks starting from the goal state. PDB heuristics and baselines are from HOG2\footnotemark{}\footnotetext{https://github.com/nathansttt/hog2/tree/PDB-refactor}.

The heuristic $h_{M}$ for the STP domain is the model introduced by \citet{li2022optimal}. It is constructed by combining two ensemble models: the first is trained on the difference between the 1--7 tile PDB and the Manhattan distance heuristic, while the second is similarly trained using the 8--15 tile PDB. For the Rubik's Cube domain, we trained two admissible heuristics \( h_{M} \), both based on the 8-corner PDB, with a size of 1.9~MB. The first model is a classifier that was trained extensively to achieve both admissibility and the same average heuristic as in the input PDB. The second model is a regression network trained using mean squared error (MSE) loss. For admissibility, we subtracted 2.2 from all predicted heuristic values. However, this adjustment reduced the average heuristic by 2.3 compared to the perfect heuristic.

% The heuristic $h_{M}$ for the STP is the model introduced by \citet{li2022optimal}. It is constructed by combining two ensemble models: the first is trained on the difference between the 1-7 tile PDB and the Manhattan distance heuristic, while the second is similarly trained using the 8-15 tile PDB. We also trained two admissible heuristic $h_{M}$ for Rubik's cube trained on 8-corner pdb. The first model is a classifier with perfect average heurstic that kept being trined till the admissibility acheived. The second model is a regression model that is trained by MSE loss and to make it adimissible we needed to deduct 2.2 from all heuristic values so the final heuristic is admissible. However, the average heuristic dropped by 2.3 from the perfect average heuristic.

% \ef{
% To evaluate the effect of parallelism alone on BTS, we conducted an experiment in both the Rubik's Cube and STP domains. We used Manhattan distance as the heuristic function for STP and the 8-corner PDB for Rubik's Cube. The results are averaged over the same 50 instances reported in Table~\ref{table: same tree}. In the Rubik's Cube domain, Parallel BTS achieved a speedup of \( 4.1 \pm 0.3 \) over standard BTS. In the STP domain, parallelizing BTS resulted in an even greater speedup of \( 5.4 \pm 0.4 \).
% }

\subsection{Fixed-Tree Evaluation}

Our first experiment is designed to evaluate the effectiveness of batch operations in Batch IDA* with a classifier-based heuristic. We designed this experiment to isolate as many variables as possible in the experiment and to isolate the quality of neural heuristic from the mechanics of Batch IDA*. We isolate the impact of the neural heuristic training by fixing the tree to only contain states expanded by a baseline PDB heuristic ($h_{\text{PDB}}$). The search evaluates the neural heuristic at each state, but uses $h_{\text{PDB}}$ for pruning.

Under this methodology all algorithms are searching exactly the same tree. This allows us to initialize the neural heuristic randomly and quickly experiment with different network topologies without having to re-train the neural heuristics. This approach can measure the effectiveness of parallelism, but does not measure the quality of any learned heuristics, which is not the primary concern of this paper.
%While it is possible to use Batch IDA* with random heuristics, this would introduce variance in tree sizes across heuristics. Additionally, random heuristics are highly inconsistent \cite{felner2011inconsistent}, which introduces the potential for the worst-case time complexities for IDA* and A* \cite{DBLP:conf/ijcai/HelmertLLOS19,kaur2022bgs}. We avoid this by calculating the neural heuristic as if we were going to use it, but then replace the heuristic values with values from $h_{\text{PDB}}$ to guide the CB-DFS. 

Results of this experiment are presented in Table \ref{table: same tree}. Each algorithm is given 5000s to solve all problems in the problem set, or an average of 100s per problem. Batch IDA* with a batch size of 1 or 8 fails to solve all of the Rubik’s Cube instances within the time limit. But, increasing the batch size to 8000 reduces the average time to 3.46s. Using two GPUs and a batch size of 8000 further reduces this to 2.51s, which is almost ten times faster than Batch A*. Note that Batch A* is only parallel on the GPU, not the CPU. Adding CPU parallelization to Batch A* is an open challenge. 

We use the first 50 instances from Korf's benchmarks for the STP domain. On this domain, going from batch size of 1 to 800 reduces the running time from 16.80s to 0.54s, a 30$\times$ improvement. Using two GPUs and batch size 800 reduces this further to 0.44s. The variance in nodes expanded is due to a deep initial phase in Batch IDA* aimed at producing sufficient work pieces, which results in batches containing extra nodes that are not necessary to solve the problem ($f$-cost greater than the threshold). Although Batch IDA* is faster than Batch A* with respect to time per node expansion, Batch A* expands fewer nodes because it detects duplicates, which Batch IDA* does not.

\sisetup{detect-weight=true, detect-inline-weight=math}

\begin{table}[tb]
\centering
\small
\begin{tabular}{
    l                                % Algorithm
    S[table-format=4.0]              % Batch Size
    l                                 % Time ± SE (manual alignment)
    S[table-format=9.0]              % Expanded
}
\toprule
\multicolumn{4}{c}{Average Performance Over 50 RC Instances} \\
\cmidrule(l){1-4}
Algorithm  & {Batch Size} & {Time ± SE (s)} & {Expanded} \\ 
\midrule
\multicolumn{4}{c}{Solution Length 8} \\
\midrule
Batch BTS        & 10       & \hspace{5pt}33.41  ±  5.80     & 537763         \\
Batch BTS        & 100      & \hspace{5pt}4.26   ±  0.54      & 569475         \\
Batch BTS        & 1000     & \hspace{5pt}0.97   ±  0.13      & 571179         \\
Batch BTS        & 2000     & \hspace{5pt}0.83   ±  0.11      & 583135         \\
Batch IDA*       & 1000     & \hspace{5pt}163.20 ±  54.63   & 366530531      \\
Batch IDA*       & 2000     & \hspace{5pt}144.91 ±  37.63   & 360097779      \\
\midrule
\multicolumn{4}{c}{Solution Length 9} \\
\midrule
Batch BTS        & 10       & \hspace{5pt}169.73 ± 65.58   & 1211327        \\
Batch BTS        & 100      & \hspace{5pt}20.25 ± 6.58     & 1217398        \\
Batch BTS        & 1000     & \hspace{5pt}3.64 ± 0.94      & 1294972        \\
Batch BTS        & 2000     & \hspace{5pt}3.01 ± 0.94      & 1294329        \\
Batch IDA*       & 2000     & \hspace{20pt}{-}             & \hspace{20pt}{-}             \\
\bottomrule
\end{tabular}
\caption{Performance on 3x3 Rubik's Cube instances with the regression model.}
\label{table:BatchBTS-vs-BatchIDA}
\end{table}

\begin{table}[tbp]
\small
\centering
% \resizebox{\columnwidth}{!}{
\begin{tabular}{
    S[table-format=1.1]
    S[table-format=2.2]
    S[table-format=3.2]
    S[table-format=2.2]
    S[table-format=2.2]
}
\toprule
\multicolumn{5}{c}{Average Time (s) Over 50 Instances} \\
\cmidrule(l){1-5}
\headercell{Model size (MB)}  & \multicolumn{2}{c}{SingleGPU Batch IDA*} & \multicolumn{2}{c}{Batch A*}\\
\cmidrule(l){2-3}
\cmidrule(l){4-5}
& RC & STP & RC & STP \\ 
\midrule
  0.2      & 3.46           &  0.52           & 11.16         & 0.18       \\
  2.3      & 4.75           & 0.78            & 16.71         & 0.29        \\
  13.6     & 10.23          & 1.45            & 20.66         & 0.38         \\  
\bottomrule
\end{tabular}
% }
\caption{Impact of model size on Batch IDA* speed.}
\label{table: model-size analysis}
\end{table}

\subsection{Regression-Based Neural Heuristics}

In Table \ref{table:BatchBTS-vs-BatchIDA} we evaluate the impact of using a real-valued heuristic on Batch IDA* and Batch BTS. In this experiment both algorithms are using an identical neural heuristic, and we are measuring the cost of searching the resulting tree with that algorithm. Because the heuristic is learned using regression, which produces real-valued heuristics, we the IDA* iterations to grow very slowly. This is confirmed by the experimental results. 

Due to the poor performance of Batch IDA*, we only report results where the algorithm is able to solve all instances within the 4-hour time limit. Batch IDA* expands orders of magnitude more states than Batch BTS, reinforcing that Batch BTS is better suited for regression-based heuristics. On instances with solution length 9, Batch IDA* expands billions of nodes per instance and fails complete within the time limit, while Batch BTS successfully solves all instances across all batch sizes. Notably, Batch BTS achieves a $56\times$ speedup when the batch size is increased from 10 to 2000.

% Because of the poor performance of Batch IDA*, we only report results where an algorithm could solve all problems within the time limit of 4 hours. Batch IDA* expands orders of magnitude more states than Batch BTS, confirming that Batch BTS should be used for regression-based heuristics. In instances with solution length of 9, Batch IDA* expands billions of nodes per instance and fails to meet the timelimit while Batch BTS solves the problem with all batch sizes and gets 20 $\times$ speedup moving from batch size of 1 to 2000.

\subsection{Model Size}

Our third experiment looks at the impact of model size on CB-DFS performance. We run Batch IDA* on the same domains with a single GPU and the best parameters from Table \ref{table: same tree}. We vary the model size by fixing the number of layers, but changing the number of fully connected neurons in each layer. The results of this experiment are found in Table \ref{table: model-size analysis}. These results show that increasing the size of the model does not linearly increase the time required to evaluate the model, although larger models are more expensive to evaluate. Thus, there will be trade-offs between model quality and speed when deploying neural heuristics.

The results also show that Batch A* is less affected by model size than Batch IDA*. The main reason for this difference is that the batches in Batch IDA* are not always full, meaning the GPU isn't being used as efficiently. This can be caused by node generations above the cost thresholds and the uneven distribution of work between subtrees. When the model requires more time to evaluate, these issues are accentuated. See the Supplementary Material for further analysis.% of how full the batches are in practice.
% due to the cost threshold. Since smaller batches are more affected by an increase in model size than larger ones, Batch IDA* experiences a larger performance slowdown. Additionally, Batch IDA* generates more nodes than Batch A*, requiring more lookups from $h_{M}$, which further adds to this performance slowdown.

\begin{table}[tbp]
\small
\resizebox{\columnwidth}{!}{%
\begin{tabular}{
    c
    S[table-format=1.2]
    S[table-format=8.0]
    S[table-format=1.2]
    S[table-format=8.0]
}
\toprule
\multicolumn{5}{c}{Average Performance Over 50 Instances} \\
\cmidrule(l){1-5}
\headercell{Algorithm}  & \multicolumn{2}{c}{RC} & \multicolumn{2}{c}{STP}\\
\cmidrule(l){2-3}
\cmidrule(l){4-5}
& {Time (s)}  & {Generated} & {Time (s)}  & {Generated} \\ 
\midrule
  4GPU Batch IDA*             &  2.56          & 12027388          & 0.57      & 238440                   \\
  2GPU Batch IDA*             &  2.97          & 11322876          & 0.58      & 239492                  \\
  SingleGPU Batch IDA*         &  6.11          & 12498111          & 0.99      & 231958                   \\
  AIDA*                        &  2.26          & 14793061          & 0.02      & 101020                     \\ \bottomrule
\end{tabular}
}
\caption{Impact of GPU counts on Batch IDA* performance.}
\label{table: GPU count}
\end{table}

\begin{table}[tbp]
\small
\resizebox{\columnwidth}{!}{%
\begin{tabular}{
    c
    c
    S[table-format=2.2]
    S[table-format=7.0]
    S[table-format=8.0]
}
\toprule
\multicolumn{5}{c}{Average Performance Over 50 Instances} \\
\cmidrule(l){1-5}
\headercell{\\ Algorithm} &\multicolumn{4}{c} {4*4 STP}\\
\cmidrule(l){2-5}
\cmidrule(l){2-5}
& {batch size} & {Time (s)} & {Expanded} & {Generated} \\ 
\midrule
  SingleGPU Batch IDA*  & 8000           &  13.11          & 777652          & 1598045                   \\
  2GPU Batch IDA*       &  8000          &  11.78          & 770341          & 1428177                   \\
  AIDA*                 & -              &  1.21           & 10513092        & 30254284                   \\
  IDA*                  & -              &  4.23           & 8180928         & 17639025                    \\
  Batch A*              & 1000           &  2.26           & 88977           & 273505                    \\ \bottomrule
\end{tabular}
}
\caption{STP results using a learned admissible heuristic.}
\label{table: learning tree on STP}
\end{table} 

\begin{table}[tbp]
\small
\resizebox{\columnwidth}{!}{%
% \begin{tabular}{c|c|c|c|c}
\begin{tabular}{
    c
    c
    S[table-format=2.2]
    S[table-format=7.0]
    S[table-format=8.0]
}
\toprule
\multicolumn{5}{c}{Average Performance Over 50 Instances} \\
\cmidrule(l){1-5}
\headercell{\\ Algorithm} &\multicolumn{4}{c} {3*3 Rubik’s cube}\\
\cmidrule(l){2-5}
\cmidrule(l){2-5}
& {batch size} & {Time (s)} & {Expanded} & {Generated} \\ 
\midrule
  2GPU Batch IDA*       & 8000           &  5.11      & 1001733          & 13028957                   \\
  SingleGPU Batch IDA*  & 8000           &  8.06      & 1005874          & 13082769                   \\
  AIDA*                 & -              &  2.74      & 2218215          & 29401255                   \\
  IDA*                  & -              &  5.84      & 2106615          & 27900766                    \\ 
  Batch A*              & 1000           &  42.07     & 779491           & 14030761                     \\  \bottomrule
\end{tabular}
}
\caption{RC results using a learned admissible heuristic.}
\label{table: learning tree on RC}
\end{table}

\subsection{GPU Count} 

To evaluate the impact of GPU count, we present the performance of Batch IDA* with different numbers of GPUs in Table \ref{table: GPU count}, using other servers for this experiment. In both domains, performance improves with additional GPUs, with the 4-GPU version matching AIDA*'s performance on the Rubik's Cube. However, the performance gains diminish with increasing GPU counts, due to fewer threads being available to populate batches. Additionally, we assess Batch IDA* using various hardware configurations, with results provided in the Supplementary Material.

\subsection{PDB Comparison}

We conclude by evaluating neural heuristics in Batch IDA* against AIDA*. Batch algorithms rely on admissible neural network heuristic $h_{M}$, while standard algorithms use a compressed PDB heuristic $h_{\text{DIV}}$ of equal size. Results for STP and Rubik's Cube are presented in Table \ref{table: learning tree on STP} and Table \ref{table: learning tree on RC}, respectively. The slower performance of learning algorithms compared to the results in Table \ref{table: same tree} is due to the need for additional tensor operations per node to ensure heuristic admissibility. Standard algorithms expand more nodes in both domains since $h_{\text{DIV}}$ is a weaker heuristic than $h_{M}$. AIDA* performs significantly better than all other algorithms, while among the batch algorithms, Batch IDA* remains faster than Batch A* in terms of constant time per node. We provide a Detailed analysis of this performance gap between AIDA* and Batch IDA* in the Supplementary Material.

\section{Discussion and Conclusions}

This paper has shown how to parallelize CB-DFS on the CPU and GPU for efficiently using neural heuristics during search. CB-DFS is explored with both Batch IDA and Batch BTS. The batching approach provides significant gains when batch sizes reach a sufficiently large size. Thus, this work provides the foundation for future research on building larger and stronger neural heuristics.

\section{Acknowledgements}

This work was supported by the National Science and Engi-
neering Research Council of Canada Discovery Grant Pro-
gram and the Canada CIFAR AI Chairs Program.

% \clearpage
\bibliography{aaai25}

% \clearpage
% \input{ReproducibilityChecklist}

\clearpage
\appendix
\section{Appendix}

In this section, we first present the results for the average batch size during runtime. Next, we examine the performance difference between Batch IDA* and AIDA*, considering their similar search procedures. Finally, we provide additional results with various hardware configurations to further assess the robustness of Batch IDA*.

\subsection{Batch Size in Practice}

As discussed earlier, the batch size during runtime is not always consistent. This occurs because CPU threads may not be able to fully populate the batch when there are only a few pieces of work remaining in the queue. Additionally, in the initial iterations with low cost thresholds, a significant portion of nodes are pruned, resulting in fewer nodes being collected within the specified timeout than the target batch size. Figure \ref{fig: batch} shows the average batch size and total number of batches during runtime for both domains. The target batch size is 8000 for the Rubik's cube and 500 for STP. In both domains, as the search space grows, the number of batches and their sizes increase. In the STP domain, the maximum average batch size reaches only about half of the target batch size due to deeper trees, more IDA* iterations, and a lower branching factor. This limitation in batch size prevents full utilization of GPU parallelism, contributing to the reduced speedup observed in STP. 

\begin{figure*}[htbp]
    \centering
    \small
    \begin{subfigure}[b]{0.8\textwidth}
        \centering
        \includegraphics[height=5.4cm]{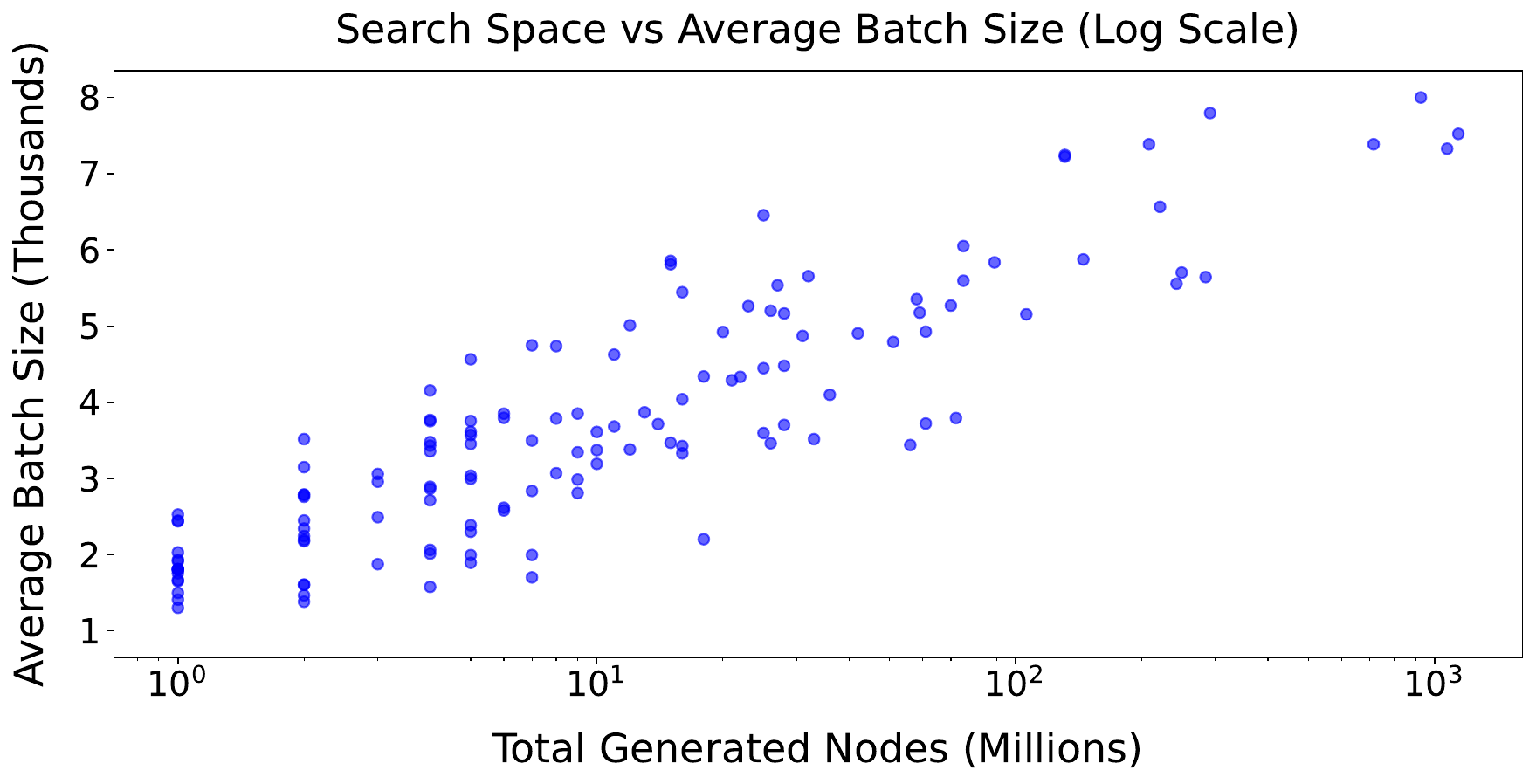}
        \caption{Average batch size in RC}
    \end{subfigure}
    
    \begin{subfigure}[b]{0.8\textwidth}
        \centering
        \vspace{10pt}
        \includegraphics[height=5.4cm]{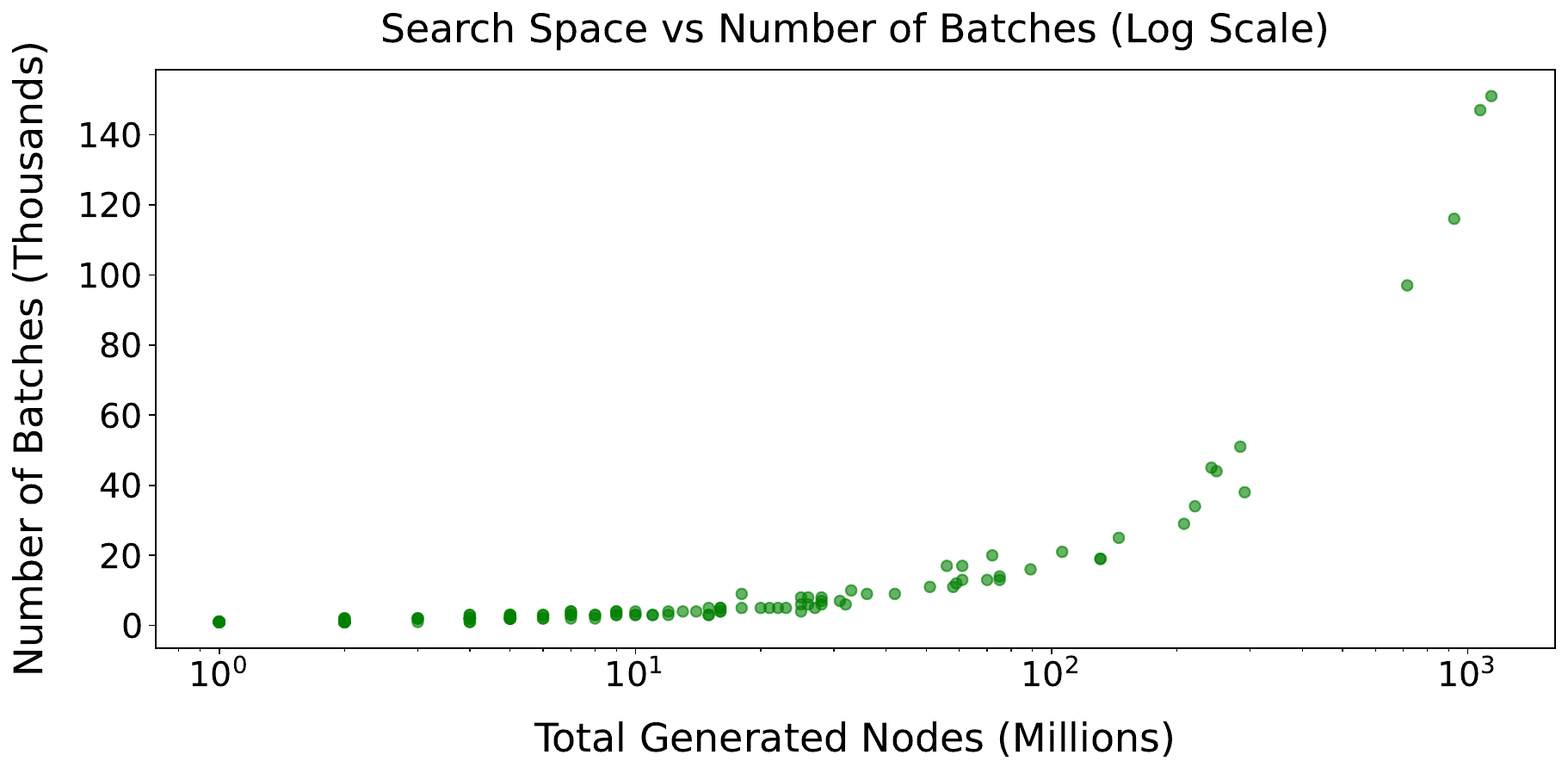}
        \caption{Number of batches in RC}
    \end{subfigure}
    
    \begin{subfigure}[b]{0.8\textwidth}
        \centering
        \vspace{10pt}
        \includegraphics[height=5.4cm]{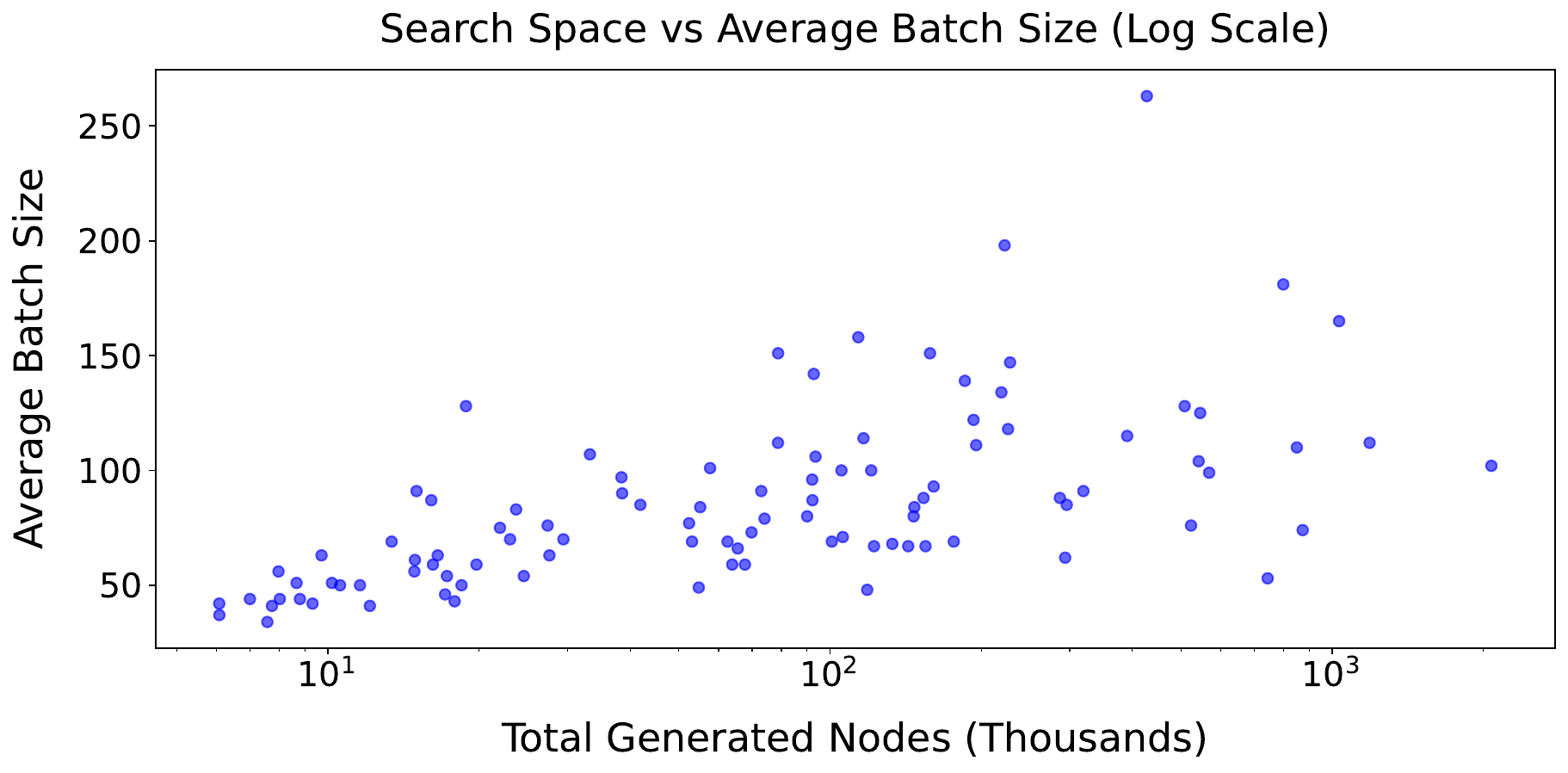}
        \caption{Average batch size in STP}
    \end{subfigure}
    
    \begin{subfigure}[b]{0.8\textwidth}
        \centering
        \vspace{10pt}
        \includegraphics[height=5.4cm]{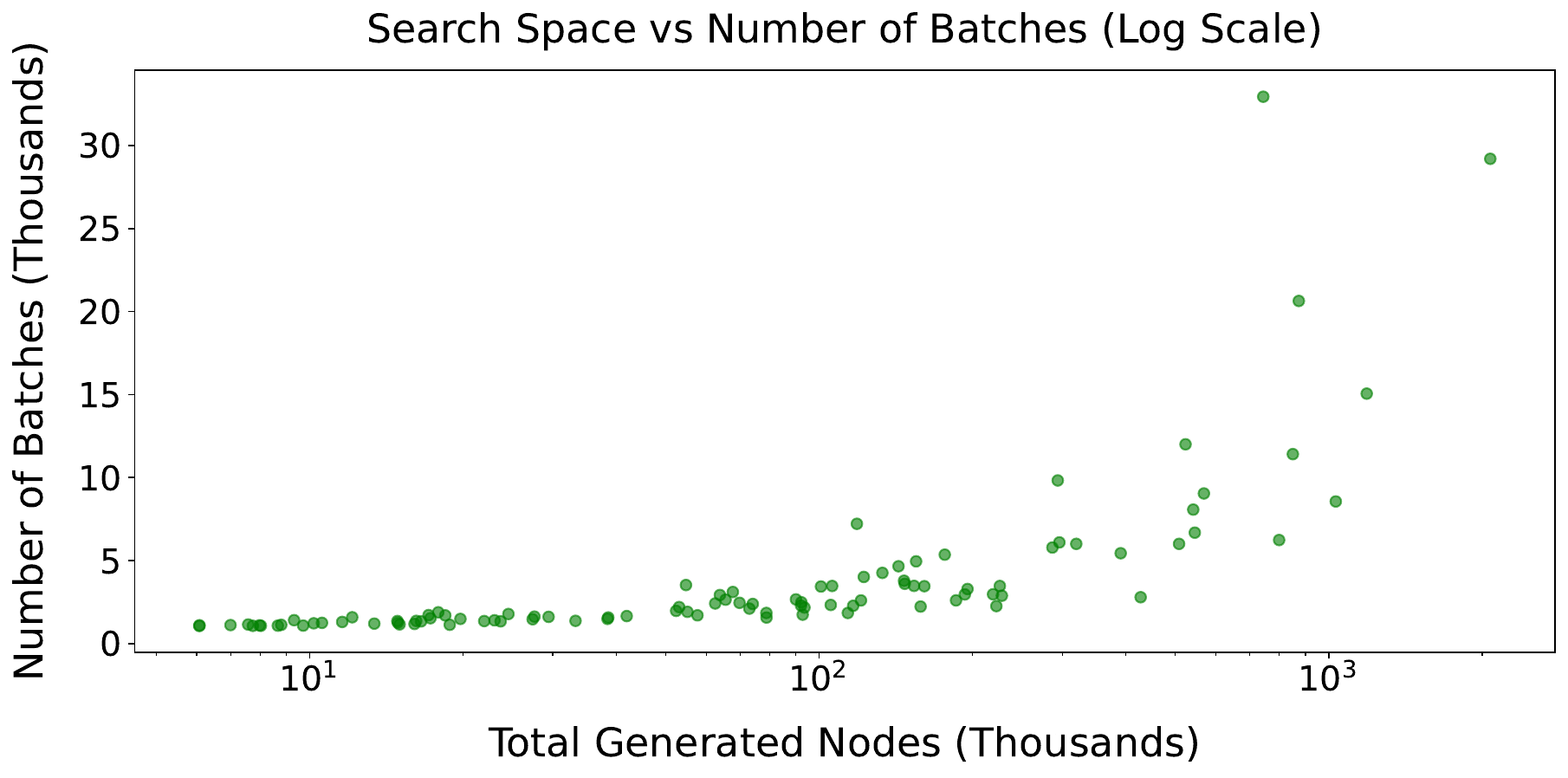}
        \caption{Number of batches in STP}
    \end{subfigure}
    
    \caption{Average batch size and total number of batches during runtime.}
    \label{fig: batch}
\end{figure*}

\subsection{Comparative Analysis of Batch IDA* and AIDA*}

In this section, we analyze the performance gap between Batch IDA* and AIDA*. Let $h_{M}$ represent a neural network heuristic and $h_{\text{PDB}}$ a pattern database (PDB) heuristic. We denote the batch size for $h_{M}$ by $b_{\text{NN}}$. If we assume that we have $n$ CPU threads, AIDA* also uses batch strategy as all threads perform PDB lookups in parallel. Thus, AIDA*'s batch size can be considered $b_{\text{PDB}}=n$. Suppose each $h_{M}$ lookup requires $t_{\text{NN}}$ time. Although $t_{\text{NN}}$ slightly varies with batch size, we assume here that it remains constant across batch sizes. Additionally, we define $t_{\text{copy}}$ as the time required to transfer the batch between CPU and GPU. 

For AIDA*, where all threads perform PDB lookups in parallel without race conditions, the lookup overhead for all threads is comparable to that for a single thread: $t_{\text{PDB}}$ for $n$ lookups is roughly equivalent to $t_{\text{PDB}}$ for one lookup. Now, we make the following assumption regarding batch sizes in Batch IDA*:

\begin{assumption} \label{assumption: batch size}
    The batch size for all $h_{M}$ lookups is always $b_{\text{NN}}$. 
\end{assumption}

We leave out of scope certain conditions in which CPU threads cannot fill the batches, activating the timeout mechanism. Given Assumption \ref{assumption: batch size}, for Batch IDA* and AIDA* to achieve comparable timing performance, the following condition must hold:

\begin{equation} \label{eq1}
    \frac{b_{\text{NN}}*t_{\text{PDB}}}{b_{\text{PDB}}}=\frac{b_{\text{NN}}*t_{\text{PDB}}}{n}=t_{\text{NN}}+2t_{\text{copy}}
\end{equation}

The left-hand side of Equation \ref{eq1} represents the time AIDA* needs to process $b_{\text{NN}}$ nodes, while the right-hand side shows the time for Batch IDA* to process the same batch. A straightforward approach to improving Batch IDA* performance is to increase $b_{\text{NN}}$, which initially yields substantial gains. However, beyond a certain point, further increases offer diminishing returns due to the rise in $t_{\text{copy}}$. Therefore, other solutions to enhance Batch IDA* efficiency include minimizing $t_{\text{NN}}$ and $t_{\text{copy}}$. This can be achieved by employing faster GPUs or optimizing $h_{M}$ using the following objective:

\begin{equation}
\begin{aligned}
    \max_{h_M} \quad & \frac{1}{|S|} \sum_{s \in S} h_M(s), \\
    \text{subject to} \quad & h_M(s) \leq h^*(s), \quad \forall s \in S, \\
    & |h_M| \text{ is minimized}
\end{aligned}
\end{equation}

This optimization balances the heuristic model size while maintaining admissibility and maximizing the average heuristic value.

\subsection{Hardware}

In this section, we assess the performance of Batch IDA* across different hardware configurations, categorized as H1, H2, and H3. Table \ref{table: hardware} provides an overview of these configurations, with their comparative analysis as follows:

\begin{itemize} 
    \item CPU performance: H3 $>$ H1 $>$ H2 
    \item GPU performance: H3 $>$ H1 $>$ H2 
\end{itemize}

\begin{table}[htbp]
\centering
\resizebox{\columnwidth}{!}{
\begin{tabular}{
    c|
    c|
    c
}
\toprule
\headercell{Server} & \multicolumn{1}{c|}{GPU} & \multicolumn{1}{c}{CPU} \\
\cmidrule(l){1-3}
H1 & GeForce RTX 2080 Ti & AMD Ryzen Threadripper 2950X \\
H2 & GeForce GTX TITAN X & Intel Xeon E5-2620 v3 \\
H3 & GeForce RTX A5000 & AMD EPYC 7313 \\
\bottomrule
\end{tabular}
}
\caption{Details of hardware resources.}
\label{table: hardware}
\end{table}

\begin{table}[htbp]
\centering
\begin{tabular}{
    c
    S[table-format=2.2]
    S[table-format=3.2]
    S[table-format=2.2]
    S[table-format=2.2]
}
\toprule
\multicolumn{5}{c}{Average Time(s) Over 50 Instances} \\
\cmidrule(l){1-5}
\headercell{Hardware}  & \multicolumn{2}{c}{SingleGPU Batch IDA*} & \multicolumn{2}{c}{AIDA*}\\
\cmidrule(l){2-3}
\cmidrule(l){4-5}
& RC & STP & RC & STP \\ 
\midrule
  H1      & 5.87           & 0.78            & 2.32         & 0.026       \\
  H2      & 10.78          & 1.01            & 6.81         & 0.045        \\
  H3      & 4.32           & 0.60            & 2.01         & 0.015         \\  
\bottomrule
\end{tabular}
\caption{Performance across different hardware resources.}
\label{table: hardware analysis}
\end{table}

Table \ref{table: hardware analysis} presents the performance of SingleGPU Batch IDA* and AIDA* across these systems. All systems are configured with 8 threads, due to one server having a maximum of 8 threads. Both methods achieve their best performance on H3 as it has the best CPU and GPU. Although further analysis is required to understand the individual impact of CPU and GPU, these results indicate that AIDA* is more sensitive to hardware variations, as demonstrated by the more performance gap between H3 and H2 compared to that of Batch IDA*.

\subsection{Hyperparameter Analysis}

In this section, we present the impact of hyperparameters on the performance of Batch IDA*. For this experiment, we consider the batch size and the number of subtrees per thread, specified by \texttt{workNum} in Algorithm \ref{alg: parallel search}. The timing results for different combinations of hyperparameter values are shown in Figure \ref{fig: hyper-analysis}. In Rubik's Cube there is only about 10\% variation between the best and worst parameters, whereas in STP there is a 1.84$\times$ difference. The STP domain has less balanced subtrees than Rubik's Cube. With too many subtrees, the work is divided out to all threads too quickly, reducing the ability to load-balancing between smaller and larger subtrees. The range of batch sizes did not have a large impact on performance. This is due to the timeout mechanism in place; if batch filling exceeds the timeout, the process is stopped, and the batch is processed with its current size.

\begin{figure}[t]\small
    \centering
        \begin{subfigure}[b]{0.234\textwidth}
        \centering
        \includegraphics[height=3.4cm]{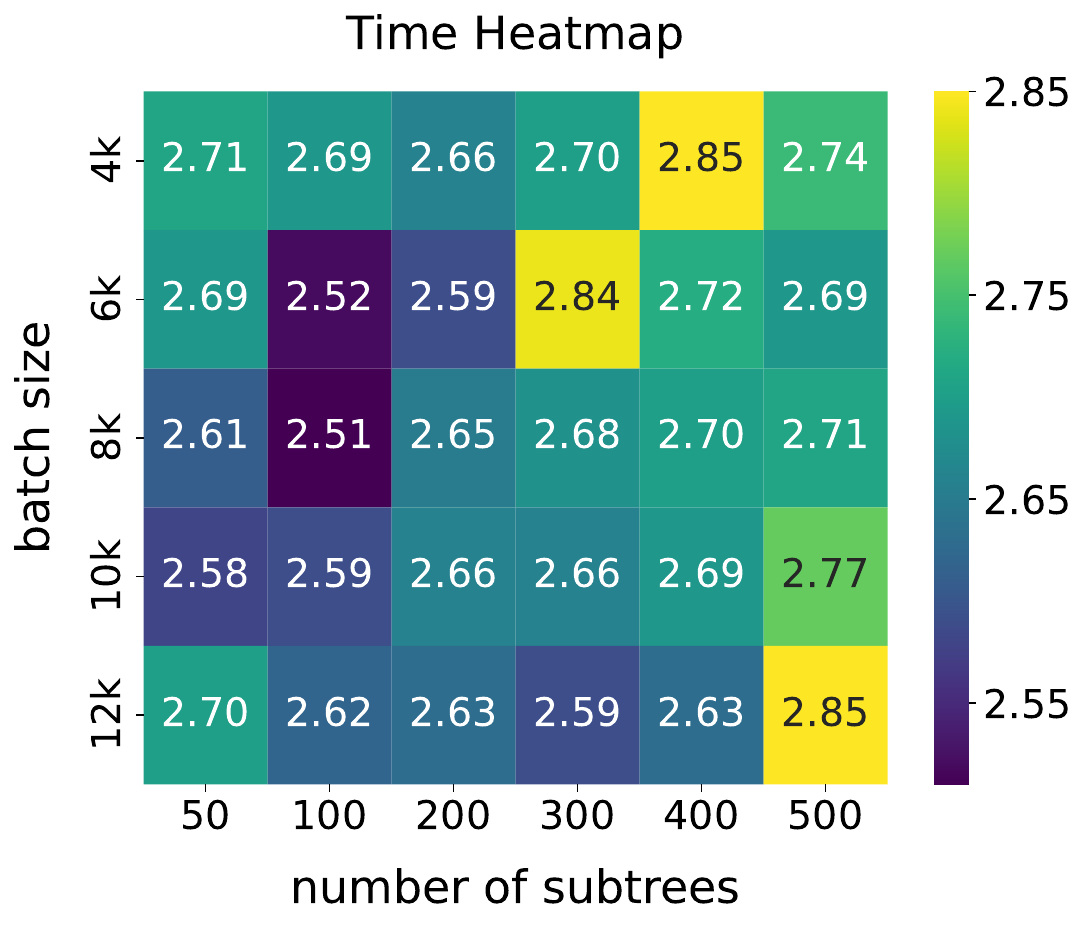}
        \caption{2GPU in RC}
        \end{subfigure}
        \begin{subfigure}[b]{0.234\textwidth}
        \centering
        \includegraphics[height=3.4cm]{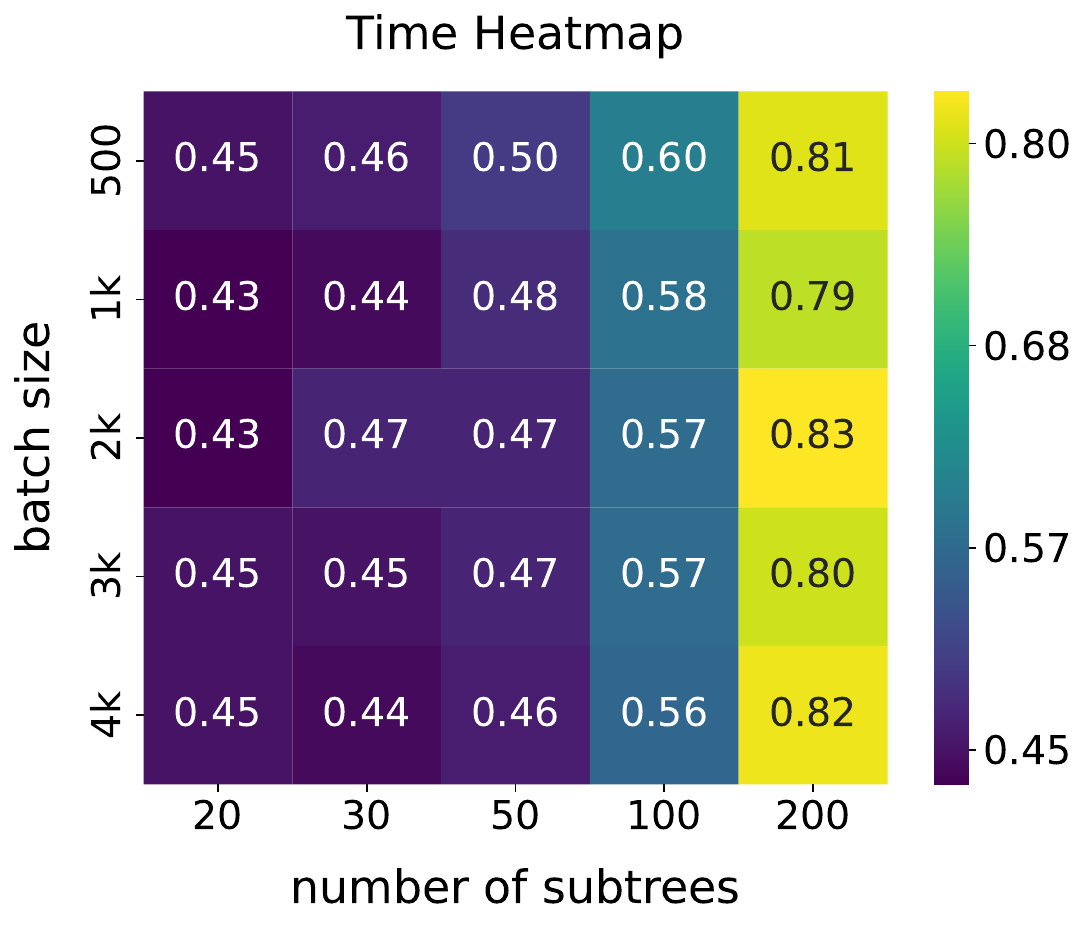}
        \caption{2GPU in STP}
        \end{subfigure}
        \begin{subfigure}[b]{0.234\textwidth}
        \centering
        \includegraphics[height=3.4cm]{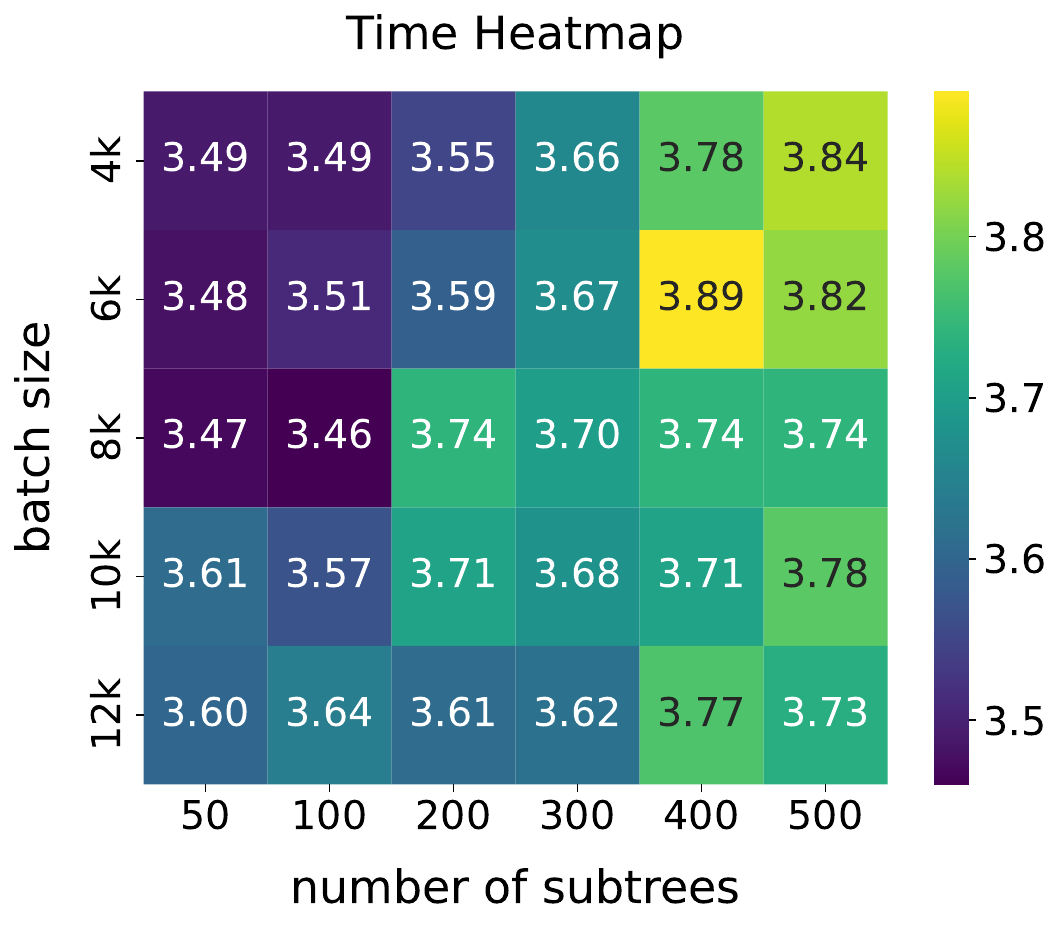}
        \caption{SingleGPU in RC}
        \end{subfigure}
        \begin{subfigure}[b]{0.234\textwidth}
        \centering
        \includegraphics[height=3.4cm]{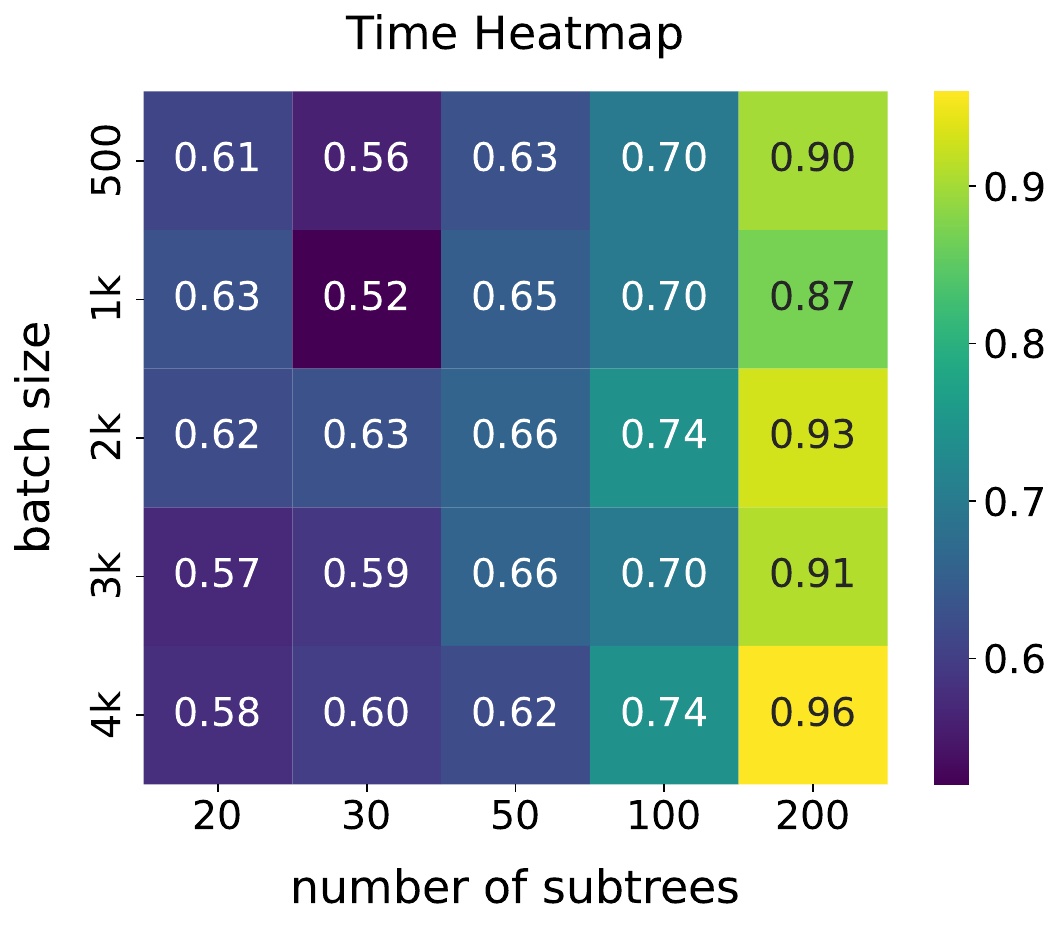}
        \caption{SingleGPU in STP}
        \end{subfigure}
    \caption{Hyperparameter analysis for Batch IDA*.} 
    % We examine two hyperparameters: batch size (y-axis) and the number of subtrees per thread (x-axis). Different range of values are used for each hyperparameter in each domain, reflecting the distinct search challenges they present.}
    \label{fig: hyper-analysis}
\end{figure}

\subsection{Proofs of Theoretical Results}

\begin{lemma}\label{lemma1}
    Given an admissible monotone cost function, iterative-deepening- A* will find a solution of least cost if one exists~\cite{korf1985depth}.
\end{lemma}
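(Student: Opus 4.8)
The plan is to follow the classical argument for IDA* optimality, specialized to the monotone (consistent) cost function assumed in the statement. First I would fix notation: for a node $n$ reached along a path of cost $g(n)$ write $f(n)=g(n)+h(n)$, and let $C^*$ be the cost of a cheapest solution, which is finite since one exists. The key structural fact is that consistency makes $f$ non-decreasing along every path from $s$; consequently the cost-bounded depth-first search run in the $i$-th iteration with bound $B_i$ explores \emph{exactly} the reachable set $N_i=\{n : f(n)\le B_i\}$, and the pruning rule ``cut off $n$ once $f(n)>B_i$'' discards no node of $N_i$. Under the standing heuristic-search assumptions used throughout the paper (a positive lower bound on edge costs and finite branching), $f(n)\le B_i$ forces $g(n)\le B_i$, so each $N_i$ is finite and each iteration terminates.

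Next I would analyse the bound sequence. By construction $B_1=h(s)=f(s)$, and $B_{i+1}$ is the least $f$-value strictly exceeding $B_i$ among nodes generated in iteration $i$; hence $B_{i+1}>B_i$, the bounds are strictly increasing, and each equals $f(n)$ for some generated node. I would then show the bounds cannot overshoot $C^*$ while still below it: suppose $B_i<C^*$ and fix an optimal path $s=n_0,n_1,\dots,n_k=g$. Along it $f$ is non-decreasing with $f(n_0)=f(s)\le C^*$ by admissibility and $f(n_k)=g(n_k)+h(g)=C^*$ (admissibility and non-negativity give $h(g)=0$). So there is a first $n_j$ on this path with $f(n_j)>B_i$; its parent $n_{j-1}$ has $f(n_{j-1})\le B_i$, hence $n_{j-1}\in N_i$ and $n_j$ is generated in iteration $i$, with $f(n_j)\le C^*$. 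Therefore $B_{i+1}\le f(n_j)\le C^*$. Combining strict monotonicity, the ceiling $C^*$, and the finiteness of the set of distinct $f$-values in $[f(s),C^*]$, I conclude that after finitely many iterations the bound equals exactly $C^*$.

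Finally I would close the loop. No iteration with $B_i<C^*$ can report a solution, since a solution is reported only upon generating $g$ with $g(g)\le B_i$, whereas $f(g)=g(g)\ge C^*>B_i$. Thus IDA* returns nothing before the iteration with bound $C^*$. In that iteration the search covers all of $N_{i}$ with $f\le C^*$, which contains the entire optimal path (each $n_j$ has $f(n_j)\le C^*$), so it reaches $g$ along a path of cost $\le C^*$; as every solution costs $\ge C^*$, the returned solution has cost exactly $C^*$. Hence IDA* terminates and returns a least-cost solution.

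I expect the main obstacle to be the termination step --- specifically, showing that the strictly increasing bounds $B_i$ actually \emph{attain} $C^*$ rather than merely approaching it from below. This needs the discreteness/finiteness hypotheses (positive minimum edge cost, locally finite graph) to ensure only finitely many distinct $f$-values lie in $[f(s),C^*]$; without them the claim can fail, so I would state the assumption explicitly. The remaining ingredients --- $f$-monotonicity along paths from consistency and $f(g)\ge C^*$ --- are routine, and this lemma is then invoked as the base case for the AIDA* and Batch IDA* optimality arguments that follow.
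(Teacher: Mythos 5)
The paper does not prove this lemma at all: it is stated as a quoted result and attributed directly to Korf (1985), serving only as the foundation for the subsequent AIDA* and Batch IDA* arguments. Your reconstruction is the standard Korf-style proof (non-decreasing $f$ along paths under consistency, strictly increasing thresholds that never overshoot $C^*$, termination at $C^*$ via finitely many distinct $f$-values), and it is correct; the side conditions you flag --- nonnegative heuristic values, a positive lower bound on edge costs, and finite branching --- are exactly the standing assumptions under which the cited result holds, so your proof matches the intended source rather than conflicting with anything in the paper.
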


\begin{lemma}\label{lemma2}
    Let $f$ be an admissible and monotone cost function. Then, the AIDA* algorithm is guaranteed to find an optimal solution, provided that a solution exists.  
\end{lemma}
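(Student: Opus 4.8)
The plan is to reduce the optimality of AIDA* to that of ordinary IDA* (Lemma~\ref{lemma1}) by exhibiting, for every cost threshold, a correspondence between the nodes expanded by AIDA* and those expanded by IDA*, and between the way each algorithm advances its threshold. First I would fix notation: for iteration $i$ of IDA* let $\theta_i$ be the cost bound (with $\theta_1 = h(s)$) and let $T_i$ be the set of nodes it expands; IDA* sets $\theta_{i+1}$ to the least $f$-value strictly exceeding $\theta_i$ among the nodes it generates but prunes in iteration $i$. I would then model AIDA* at the level of Algorithm~\ref{alg: batch IDA}: it repeatedly (i) runs a parallel cost-bounded DFS over the work packets at a \emph{common} bound, and (ii) advances the bound via \texttt{UpdateThreshold}, where each processor reports the least pruned $f$-value it observed and the new bound is the minimum of these reports.

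Then I would prove two structural facts. \textbf{(Frontier covering.)} The ``Initial Data Partitioning'' and ``Distributed Node Expansion'' phases examine every start-to-goal path of length at most $d_{init}$ during partitioning, and every longer path passes through exactly one work-packet root; hence the partitioning prefix together with the subtrees rooted at the packets equals the full implicit search tree, so no node of any $T_i$ is lost. A point to handle carefully here is the elimination of duplicate frontier nodes: since in the evaluated domains the partitioning depth is fixed and edge costs are uniform, all paths reaching a given frontier state carry the same $g$-value, so collapsing duplicates discards no path-cost information and cannot conceal a cheaper solution. \textbf{(Round faithfulness.)} For a fixed bound $\theta_i$, a work packet is marked \emph{done} only after its entire subtree of nodes with $f \le \theta_i$ has been generated, and the round ends only when the shared queue is empty and every processor is idle; therefore, independent of the nondeterministic interleaving and work stealing, the round expands exactly $T_i$ and every pruned $f$-value above $\theta_i$ is folded into some processor's running minimum, so \texttt{UpdateThreshold} returns exactly IDA*'s $\theta_{i+1}$.

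With these facts, a straightforward induction on $i$ shows AIDA* performs the identical sequence of bounded searches as IDA*. Let $C^*$ be the optimal cost; as in the standard analysis of IDA*, the first bound $\theta_k$ that is not less than $C^*$ in fact equals $C^*$, so the round-$k$ DFS (run to completion within the bound) encounters a cost-$C^*$ goal, and since the bound is $C^*$ it can return no goal of larger cost; earlier rounds return nothing because all goals cost $\ge C^* > \theta_{k-1}$. Hence AIDA* returns a solution of cost $C^*$ whenever a solution exists.

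I expect the main obstacle to be the \emph{round faithfulness} argument: asynchrony and dynamic load balancing make per-round progress an arbitrary interleaving of the processors, so one must show the global termination predicate (empty queue and all processors idle) cannot become true until every $f \le \theta_i$ node of every packet has been expanded and every pruned value aggregated --- i.e., that a packet's \emph{done} flag is set only when its bounded subtree is genuinely exhausted and that stolen packets preserve this invariant. A secondary difficulty is making the duplicate-elimination step rigorous beyond the uniform-cost, fixed-depth setting actually used in the experiments.
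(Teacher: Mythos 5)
Your proposal is correct and follows essentially the same route as the paper: reduce to Lemma~\ref{lemma1} by showing that every node IDA* expands at a given threshold is either generated in the initial partitioning phase (depth $\le d_{init}$) or lies in the subtree of a work-packet root that AIDA* fully explores under the same bound and cost function. Your write-up is in fact more careful than the paper's, which asserts the threshold-update equivalence without argument and never addresses duplicate elimination or the asynchronous-termination (``round faithfulness'') issue you flag.
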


\begin{proof}

From Lemma \ref{lemma1}, we know that IDA* using the cost function $f$ finds an optimal solution if a solution exists. Therefore, our task is to demonstrate that AIDA*, using the same cost function $f$ as IDA*, expands all the nodes that IDA* expands at each cost threshold. We know that, like IDA*, AIDA* does not increase the cost threshold until all nodes with costs below the current threshold have been expanded. Assume, for the sake of contradiction, that there exists a node $s_{missed}$ that IDA* expands but AIDA* does not. Let us also assume that the initial phase in AIDA* proceeds up to a depth of $d_{init}$. We select a path from the root to $s_{missed}$, denoted as $path$. The node $s_{missed}$ falls into one of two categories:

\begin{enumerate}[label=\arabic*.]
    \item The length of $path$ is less than or equal to $d_{init}$.
    \item The length of $path$ exceeds $d_{init}$.
\end{enumerate}

If $s_{missed}$ falls within the first category, it must be expanded during initial phase, as this phase generates the entire tree up to $d_{init}$ levels. Therefore, in this case, $s_{missed}$ cannot be missed by AIDA*. If $s_{missed}$ falls into the second category, we can express the $path$ as $\{root,s_{1},...s_{d_{init}},...,s_{missed}\}$. The $path$ intersects the tree generated in the initial phase at node $s_{d_{init}}$. We know that, during second and third phases, AIDA* grows the subtree for all leaf nodes of the initial tree, including $s_{d_{init}}$. Since both AIDA* and IDA* use the same cost function $f$, the same cost threshold, and both employ a depth-first search strategy, AIDA* is guaranteed to expand the node $s_{missed}$ during subtree expansion of $s_{d_{init}}$. Thus, AIDA* does not miss any node that IDA* expands. This concludes the
proof that AIDA* will find an optimal solution if one exists.
\end{proof}

\begin{theorem}
    Let $h_{M}$ be an admissible heuristic learned from a PDB heuristic $h$. Then, Batch IDA* using $h_{M}$ is guaranteed to find an optimal solution, assuming that a solution exists.
\end{theorem}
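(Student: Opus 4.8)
The plan is to reduce the statement to Lemma~\ref{lemma2} (optimality of AIDA*) by establishing two facts: (i) the cost function actually used by Batch IDA* is admissible and monotone, so that Lemma~\ref{lemma2} is applicable; and (ii) for every cost threshold, Batch IDA* expands exactly the same set of nodes as AIDA* run with that same cost function, so that batching and CPU/GPU scheduling change only the order of work, not the work itself. Once both hold, Lemma~\ref{lemma2} tells us that the first solution Batch IDA* reports is of least cost, and that a solution is reported whenever one exists, which is exactly the claim.

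For (i), the cost function is $f(n) = g(n) + h_M(n)$. By hypothesis $h_M$ is admissible, i.e.\ $h_M(n) \le h^*(n)$ for every $n$ (for heuristics learned in the style of \citet{li2022optimal} this follows from $h_M \le h \le h^*$, $h$ being the source PDB), so $f$ never overestimates the cost of an optimal solution through $n$ and is an admissible cost function. Monotonicity of $f$ along a path is either inherited when $h_M$ is consistent, or---since a neural approximation of a PDB need not preserve consistency---can be enforced without changing anything else by the standard pathmax device of using $\max(f(n), f(\mathrm{parent}(n)))$ as the search descends; this does not change which paths have cost at most the threshold and so does not affect the optimality argument. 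With $f$ admissible and monotone, Lemma~\ref{lemma2} applies to AIDA* with this $f$.

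For (ii), which is the crux, I would argue that Batch IDA* differs from AIDA* only in (a) where and when heuristic values are computed and (b) the order in which subtrees are interleaved, and that neither affects the per-iteration expanded-node set. Regarding (a): the GPU computes the same function $h_M$ the CPU would, so every node ultimately receives the same $f$-value; batching only delays delivery. Because \texttt{ProcessBatch} fires on a \texttt{timeout} as well as when the batch is full, and because only finitely many generated nodes are awaiting evaluation at any instant, every pending node's heuristic becomes ready after finitely many batch rounds, so no search thread blocks forever on a ``not ready'' node. Regarding (b): in \texttt{CB-DFS} each thread round-robins over a stack of \texttt{workNum} subtrees, and \texttt{DoIteration} expands the top node of a subtree exactly when its heuristic is ready and $f \le \text{bound}$---the same pruning test IDA*/AIDA* use---and otherwise defers that subtree and revisits it later. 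The \texttt{while} loop (miss~$<$~\texttt{workNum}), together with refilling from \texttt{works} on line~\ref{code: new work}, does not terminate until every work packet has been popped and every subtree explored to exhaustion under the current bound. Hence, for a fixed bound, the multiset of generated nodes and the set of expanded nodes for Batch IDA* coincide with those of AIDA* at that bound; only the interleaving differs. Since the Work Queue is produced by the same \texttt{GenerateWork} procedure (the first two AIDA* phases) and \texttt{UpdateThreshold} raises the bound to the same minimal $f$-value exceeding the current one, the entire sequence of thresholds and expanded-node sets matches AIDA*, and Lemma~\ref{lemma2} finishes the proof.

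\textbf{The main obstacle} I expect is the liveness/termination part of (ii): making precise that no configuration of search threads and the batch-processing thread can deadlock, so that each CB-DFS iteration actually completes. This needs three ingredients combined carefully---the \texttt{timeout} guarantees batch progress even when the batch is not full and even when all search threads are simultaneously blocked; the number of generated-but-unevaluated nodes is bounded (roughly by the batch capacity plus the work in flight); and a thread that cannot advance one subtree always moves to another, so threads never spin indefinitely on a single node. Everything else is bookkeeping: checking that Batch IDA*'s pruning test and bound update mirror IDA*'s, and observing that batching and interleaving are order-only changes that leave each iteration's expanded-node set invariant, after which the result is immediate from Lemma~\ref{lemma2}.
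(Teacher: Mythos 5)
Your proposal is correct at the paper's level of rigor, and it reaches the result by a noticeably different decomposition than the paper's own proof. The paper anchors everything to AIDA* run with the \emph{original PDB heuristic} $h$: it invokes Lemma~\ref{lemma2} for that instance (where monotonicity is unproblematic) and then shows Batch IDA* with $h_{M}$ cannot skip anything, via two properties --- ``heuristic evaluation completeness'' (the thread-switching mechanism means no node is expanded or pruned before its $h_{M}$ value arrives, which is your liveness point stated as an assertion rather than argued) and ``expansion condition alignment,'' i.e.\ $h_{M}(s)\leq h(s)$, so every node AIDA* expands under $h$ also satisfies $g+h_{M}\leq$ threshold and is expanded by Batch IDA*. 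You instead apply Lemma~\ref{lemma2} to AIDA* run with $h_{M}$ itself (checking admissibility of $f=g+h_{M}$ and patching possible inconsistency with pathmax), and then argue that batching and CPU/GPU interleaving change only the order and timing of work, so Batch IDA* and AIDA* with the same $f$ have identical per-threshold expansion sets and identical threshold sequences. Your route buys two things the paper glosses over: it does not need the paper's claim that admissibility of $h_{M}$ ``inherently'' gives $h_{M}\leq h$ (admissibility only gives $h_{M}\leq h^{*}$, so the paper's step really relies on how $h_{M}$ was trained), and it confronts the monotonicity hypothesis of Lemma~\ref{lemma2} and the deadlock/termination question explicitly, where the paper simply asserts that the switching mechanism suffices. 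What the paper's route buys in exchange is brevity: by comparing against AIDA* with the consistent PDB heuristic it never has to discuss pathmax or threshold-sequence equality, only the one-directional domination. The one place where your write-up is slightly stronger than needed (and would require the most care to make fully precise) is the claim that the generated/expanded multisets coincide exactly with AIDA*'s; for optimality it is enough that, per threshold, nothing AIDA* would expand is omitted and that each CB-DFS call terminates, which is exactly the weaker pair of facts the paper certifies.
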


\begin{proof}
    From Lemma \ref{lemma2}, we know that AIDA*, using a PDB heuristic $h$, is guaranteed to find the optimal solution if a solution exists. To extend this guarantee to Batch IDA*, we must demonstrate that CB-DFS, when using the learned heuristic $h_{M}$ derived from $h$, does not omit any node expansions at any cost threshold. The primary difference between Batch IDA* and AIDA* lies in the heuristic computation. Therefore, we need to verify two properties:

    \begin{enumerate}[label=\arabic*.]
        \item \textit{Heuristic Evaluation Completeness}: No node is expanded or discarded without having its heuristic properly evaluated. In CB-DFS, each thread switches to another subtree if the heuristic evaluation of the top node in its work is not yet available. The thread does not advance the subtree until the batch evaluation is complete and the heuristics are ready. This mechanism guarantees that every node is assessed using a valid heuristic $h_{M}$.
        \item \textit{Expansion Condition Alignment}: Given that $h_{M}$ is an admissible heuristic, it inherently serves as a lower bound for the original PDB heuristic $h$:
            \begin{equation*}
                h_{M}(s) \leq h(s) \ \ \ \ \forall s \in S.
            \end{equation*}
            This inequality implies that if a node satisfies the expansion condition under AIDA*, meaning $h(s)+g(s) \leq \textit{threshold}$,  it will also satisfy the expansion condition under the CB-DFS: $h_{M}(s)+g(s) \leq \textit{threshold}$. Therefore, any node that qualifies for expansion in AIDA* will also qualify in Batch IDA*'s CB-DFS.
    \end{enumerate}   
    
    Since both properties hold, Batch IDA* is guaranteed to find an optimal solution, assuming one exists.
\end{proof}

A similar argument extends to BTS, and any algorithm that uses our CPU/GPU-based CB-DFS.

\end{document}